\newcommand{\citeposs}[1]{\citeauthor{#1}'s (\citeyear{#1})}
\newtheorem{lemma}{Lemma}
\newtheorem{proposition}{Proposition}
\newtheorem{corollary}{Corollary}
\newtheorem{definition}{Definition}
\newtheorem{fact}{Fact}
\newtheorem{estimator}{Estimator}
\crefname{section}{\S}{\S\S}
\Crefname{section}{\S}{\S\S}
\crefname{table}{Table}{}
\crefname{figure}{Figure}{}
\crefname{algorithm}{Algorithm}{}
\crefname{equation}{Eq.}{}
\crefname{appendix}{App.}{}
\crefname{lemma}{Lemma}{}
\crefname{definition}{Definition}{}
\crefname{proposition}{Proposition}{}
\newcommand{\defeq}[0]{\mathrel{\stackrel{\textnormal{\tiny def}}{=}}}
\newcommand{\calX}{\mathcal{X}}
\newcommand{\data}{\mathcal{D}}
\newcommand{\ent}{\mathrm{H}}
\newcommand{\enthat}[1]{\widehat{\ent}_{\textsc{#1}}}
\newcommand{\enthatmle}{\widehat{\ent}_{\textsc{mle}}(\calD)}
\newcommand{\phat}{\widehat{p}_{\textsc{mle}}}
\newcommand{\pnsb}{p_{\scriptsize \textsc{nsb}}}
\newcommand{\KL}{\mathrm{KL}}
\newcommand{\expect}[1]{\mathbb{E}\left[#1\right]}
\newcommand{\expectp}[1]{\mathbb{E}_p\left[#1\right]}
\newcommand{\bias}[1]{\mathrm{bias}\left(#1\right)}
\newcommand{\Dirichlet}{\mathrm{Dirichlet}}
\newcommand{\ddp}{\mathrm{d}p}
\newcommand{\calD}{\mathcal{D}}
\newcommand{\valpha}{\boldsymbol \alpha}
\newcommand{\defn}[1]{\textbf{#1}}
\newcommand{\defndata}{Let $\mathcal{D}$ be our dataset of size $N$ sampled from $p$.}
\newcommand{\phatxk}{\phat(x_k)}
\newcommand{\pxk}{p(x_k)}
\newcommand{\cxk}{c(x_k)}
\newcommand{\xk}{x_k}
\definecolor{green0}{HTML}{e9f5ea}
\definecolor{green1}{HTML}{d3ecd4}
\definecolor{green2}{HTML}{bde2bf}
\definecolor{green3}{HTML}{a8d8aa}
\definecolor{green4}{HTML}{92ce95}
\definecolor{green5}{HTML}{7cc57f}
\newcommand*\iftodonotes{\if@todonotes@disabled\expandafter\@secondoftwo\else\expandafter\@firstoftwo\fi}  
\title{Estimating the Entropy of Linguistic Distributions}
\newcommand{\ethz}{\emoji[openmoji]{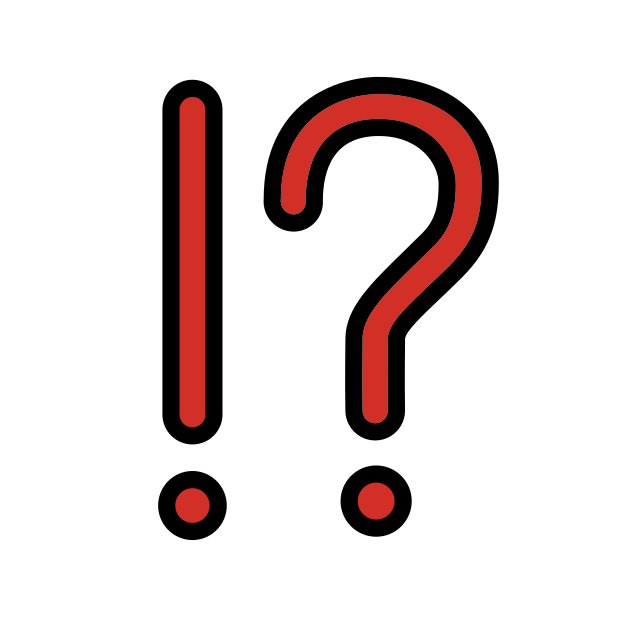}}
\newcommand{\georgetown}{\emoji[openmoji]{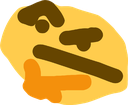}}
\newcommand{\emo}[1]{\raise1.0ex\hbox{\normalfont#1}}
\renewcommand{\digamma}{\psi}
\newcommand{\trigamma}{\psi_1}
\title{Estimating the Entropy of Linguistic Distributions}
\author{
Aryaman Arora\emo{\georgetown}~\;~Clara Meister\emo{\ethz}~\;~Ryan Cotterell\emo{\ethz}\\
  \emo{\georgetown}Georgetown University~\;~\emo{\ethz}ETH Z{\"u}rich \\
  \texttt{\href{mailto:aa2190@georgetown.edu
}{aa2190@georgetown.edu}}~\;~\texttt{\{\href{mailto:clara.meister@inf.ethz.ch}{clara.meister},\href{mailto:ryan.cotterell@inf.ethz.ch}{ryan.cotterell}\}@inf.ethz.ch}
}
\begin{document}
\maketitle
\begin{abstract}
Shannon entropy is often a quantity of interest to linguists studying the communicative capacity of human language.   
However, entropy must typically be estimated from observed data because researchers do not have access to the underlying probability distribution that gives rise to these data.
While entropy estimation is a well-studied problem in other fields, there is not yet a comprehensive exploration of the efficacy of entropy estimators for use with \textit{linguistic} data.
In this work, we fill this void, studying the empirical effectiveness of different entropy estimators for linguistic distributions. 
In a replication of two recent information-theoretic linguistic studies, we find evidence that the reported effect size is over-estimated due to over-reliance on poor entropy estimators. 
Finally, we end our paper with concrete recommendations for entropy estimation depending on distribution type and data availability.
\end{abstract}

\section{Introduction}
There is a natural connection between information theory, the mathematical study of communication systems, and linguistics, the study of human language---the primary vehicle that humans employ to communicate. 
Researchers have exploited this connection since information theory's inception \cite{shannon1951prediction,cherry,harris1991theory}.
With the advent of modern computing, the number of information-theoretic linguistic studies has risen, exploring claims about language such as the optimality of the lexicon \citep{piantadosi2011word,pimentel-etal-2021-non}, the complexity of morphological systems \citep{cotterell-etal-2019-complexity,wu-etal-2019-morphological,rathi-etal-2021-information}, and the correlation between surprisal and language processing time \citep[\textit{inter alia}]{smith2013-log-reading-time,bentz2017entropy,goodkind-bicknell-2018-predictive,cotterell-etal-2018-languages,meister-etal-2021-revisiting}.

\begin{figure}
    \centering
    \includegraphics[width=\linewidth]{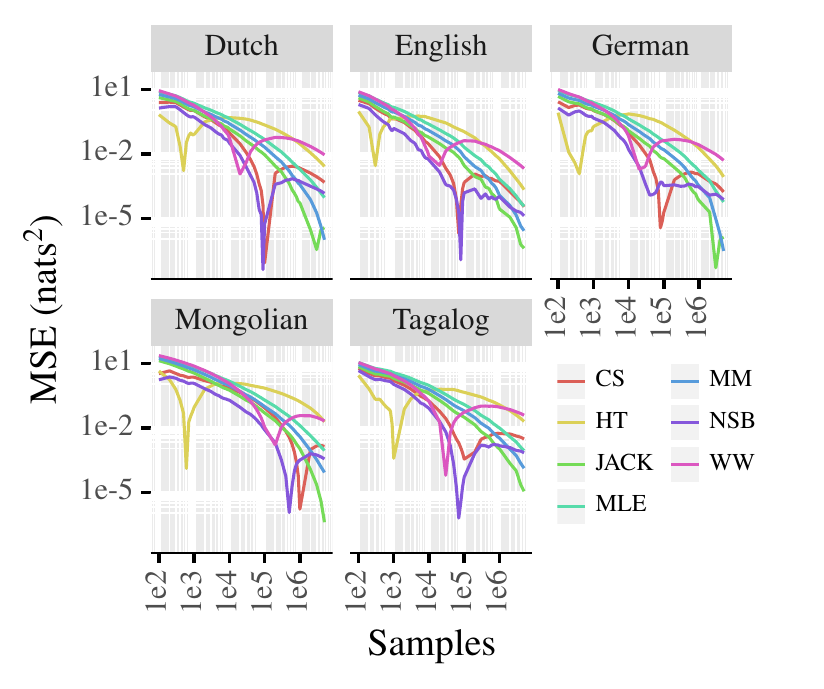}
    \caption{A comparison of several estimators of the entropy of the unigram distribution across 5 languages. Minima in all the graphs indicate sign changes in the error of the estimate, from an under- to an over-estimate.}
    \label{fig:celex_big}
\end{figure}

In information-theoretic linguistics, a fundamental quantity of research interest is entropy.
Entropy is both useful to linguists in its own right, and is necessary for estimating other useful quantities, e.g., mutual information.
However, the estimation of entropy from raw data can be quite challenging \cite{paninski,nowozin-1}, e.g., in expectation, the plug-in estimator \emph{underestimates} entropy \citep{miller1955note}.
Linguistic distributions often present additional challenges.
For instance, 
many linguistic distributions, such as the unigram distribution, follow a power law \cite{zipf,mitzenmacher2004brief}.\footnote{As \citet{nsb} highlight, when estimating the entropy of a distribution that follows a power law, it is often possible to get an effectively meaningless estimate that is completely determined by the estimator's hyperparameters.}  
Linguistics is not the only field with such nuances, and so a large number of entropy estimators have been proposed in other fields \citep[\textit{inter alia}]{chao2003nonparametric,JMLR:v15:archer14a}.
However, no work to date has attempted a practical comparison of these estimators on \emph{natural language} data.
This work fills this empirical void.

Our paper offers a large empirical comparison of the performance of 6 different entropy estimators on both synthetic and natural language data, an example of which is shown in \cref{fig:celex_big}.
We find that \citeposs{chao2003nonparametric} is the best estimator when very few data are available, but \citeposs{nsb} is superior as more data become available.
Both are significantly better (in terms of mean-squared error) than the na{\"i}ve plug-in estimator. 
Importantly, we also show that two recent studies \citep{williams+al.tacl21,mccarthy-etal-2020-measuring} show smaller effect sizes when a better estimator is employed; however, we are able to reproduce a significant effect in both replications.
We recommend that future studies carefully consider their choice of entropy estimators, taking into account data availability and the nature of the underlying distribution.\footnote{Our code is available at \url{https://github.com/aryamanarora/entropy-estimation}.}\looseness=-1

\section{Entropy and Language}\label{sec:entlang}

Shannon entropy is a quantification of the uncertainty in a random variable. Given a (discrete) random variable $X$ with probability distribution $p$ over $K$ possible outcomes $\calX = \{\xk\}_{k=1}^K$, the Shannon entropy of $X$ is defined as
\begin{equation}\label{eq:ent}
   \ent(X) =  \ent(p) \defeq -\sum_{k=1}^K \pxk \log \pxk
\end{equation}
Entropy has many uses throughout science and engineering; for instance, 
\citet{shannon} originally proposed entropy as a lower bound on the compressibility of a stochastic source.

Yet the application of information-theoretic techniques to linguistics is not so straightforward: Information-theoretic measures are defined over probability distributions and, in the study of natural language, we typically only have access to \emph{samples} from the distribution of interest, e.g., the phonotactic distribution in English, which permits word we cannot find in a corpus, like \textit{blick}, rather than the true probabilities required in the computation of \cref{eq:ent}. Indeed, it is often the case that not all elements of $\calX$ are even observed in available data---such as words that were coined after the a corpus was collected.

Rather, $p$ must be approximated in order to estimate $\ent(p)$. 
One solution is \defn{plug-in estimation}: Given samples from $p$, the maximum-likelihood estimate for $p$ is ``plugged'' into \cref{eq:ent}.
However, as originally noted by \citet{miller1955note}, this strategy generally yields poor estimates.\footnote{A proof of this result in given in full in \cref{prop:negative-bias}.}  
It is thus necessary to derive more nuanced estimators.

\section{Statistical Estimation Theory}\label{sec:theory}

Statistical estimation theory provides us with the tools for estimating various quantities of interest based on samples from a distribution.

Central to this theory is the \defn{estimator}: A statistic that approximates a property of the distribution our data is drawn from.
More formally, let $\data = \{\widetilde{x}^{(n)}\}_{n=1}^N$ be samples from an unknown distribution $p$.
Suppose we are interested in a quantity $\theta$ that can be computed as a function of the distribution $p$. 
An estimator $\widehat\theta(\data)$ for $\theta$ is then a function of the data $\data$ that provides an approximation of $\theta$.\looseness=-1

Two properties of an estimator are often of interest: \defn{bias}---the difference between the true value of $\theta$ and the expected value of our estimator $\widehat\theta(\data)$ under $p$---and \defn{variance}---how much $\widehat\theta(\data)$ fluctuates from sample set to sample set: \looseness=-1
\begin{align}
    \mathrm{bias}(\widehat\theta(\data)) &\defeq \mathbb{E}_{p}[\widehat\theta(\data)] - \theta \label{eq:bias}\\
    \mathrm{var}(\widehat{\theta}(\data)) &\defeq \mathbb{E}_p[(\widehat{\theta}(\data) - \mathbb{E}_p[\widehat{\theta}(\data)]) ^2]
\end{align}
It is desirable to construct an estimator that has both low bias and low variance. 
However, the \defn{bias--variance} trade-off tells us that we often have to pick one, and we should focus on a balance between the two.
This trade-off is evinced through mean-squared error (MSE), a metric oft-employed for assessing estimator quality:
\begin{equation}
    \mathrm{MSE}(\widehat\theta(\data)) = \mathrm{bias}(\widehat{\theta}(\data))^2 + \mathrm{var}(\widehat{\theta}(\data))\label{eq:mse}
\end{equation}
To recognize the trade-oft note that, for any fixed MSE, a decrease in bias must be compensated with an increase in variance and vice versa.
Indeed, it is important to recognize that there is typically  no single estimator that is seen as ``best.'' Different estimators balance the bias--variance trade-off differently, making their perceived quality specific to one's use-case.  
Importantly, the effectiveness of an estimator also depends on the domain of interest. Consequently, an empirical study of various entropy estimators, which this paper provides, is necessary in order to determine which entropy estimators are best suited for linguistic distributions.

\subsection{Plug-in Estimation of Entropy}
A simple, two-step approach for estimating entropy is \defn{plug-in} estimation.
In the first step, we compute the maximum-likelihood estimate for $p$ from our dataset $\data$ as follows
\begin{equation}\label{eq:mle-estimate}
\phatxk \defeq \frac{\sum_{n=1}^N \mathbbm{1}\{\widetilde{x}^{(n)} = \xk\}}{N}
\end{equation}
In the second step, we plug \cref{eq:mle-estimate} into \cref{eq:ent} directly, which results in the estimator $\enthat{mle}(\calD)$.
So why is this a bad idea? While our probability estimates themselves are unbiased, entropy is a concave function. 
Consequently, by Jensen's inequality, this estimator is, in expectation, a \emph{lower bound} on the true entropy (see \cref{app:mle_bias} for proof). 
Moreover, when $N \ll K$, which is often the case in power-law distributed data, the estimate becomes quite unreliable \cite{nsb}.

\subsection{An Ensemble of Entropy Estimators}\label{sec:ent}

\paragraph{MM---\citet{miller1955note} and \citet{madow1948limiting}.}
The first innovation in entropy estimation known to the authors is a simple fix derived from a first-order Taylor expansion of MLE (described above).
The Miller--Madow estimator only involves a simple additive correction, which is shown below:
\begin{equation}
 \enthat{mm}(\calD) \defeq  \enthat{mle}(\calD) + \frac{K - 1}{2N}
\end{equation}
where $K$ is size of the support of $\calX$.
The Miller--Madow correction should seem intuitive in that we add $\frac{K-1}{2N} \geq 0$ to compensate for the negative bias of the estimator.
A full derivation of the Miller--Madow estimator is given in \cref{prop:miller-madow}.

\paragraph{JACK---\citet{zahl1977}.} 
Next we consider the jackknife, which is a common strategy used to correct for the bias of statistical estimators.
In the case of entropy estimation, we can apply the jackknife out of the box to correct the bias inherent in the MLE estimator.
Explicitly, this is done by averaging plug-in entropy estimates $\enthat{mle}(\calD)$ albeit with the $n$\textsuperscript{th} sample from the data removed; we denote this held-out plug-in estimator as $\enthat{mle}^{\setminus n}(\calD)$.
Averaging these ``held-out'' plug-in estimators results in the following simple entropy estimator
\begin{equation}
    \enthat{jack}(\calD) \defeq N\,\enthat{mle}(\calD) - \frac{N-1}{N}\sum_{n=1}^N{ \enthat{mle}^{\setminus n}(\calD)}
\end{equation}
Note that the jackknife is applicable to any estimator, not just $\enthat{mle}(\calD)$, and, thus, can be combined with any of the other approaches mentioned. 

\paragraph{HT---\citet{horvitz-thompson}.}
Horvitz--Thompson is a general scheme for building estimators that employs importance weighting in order to more efficiently estimate a function of a random variable. Importantly, this estimator gives us the ability to compensate for situations where  the probability of an outcome is so low that it is often not observed in a sample, which is often the case for e.g., power-law distributions.  

While a full exposition of HT estimators is outside of the scope of this work, in essence, we can divide the expected probability of a class by each class's estimated inclusion probability to compensate for such situations. Given the true probability of an outcome $\pxk$, the probability that it occurs at least once in a sample of size $N$ is $1 - (1 - \pxk)^N$.
The HT estimator for entropy is then defined as
\begin{equation}\label{eq:ht}
\enthat{ht}(\calD) \defeq -\sum_{k=1}^K \frac{\phatxk \log{\phatxk}}{1 - (1 - \phatxk)^N}
\end{equation}
using our MLE probability estimates $\phatxk$.

\begin{table*}[t]
    \centering
    \adjustbox{max width=\linewidth}{
    \small
    \begin{tabular}{lrrrr|rrrr}
        \toprule
        & \multicolumn{4}{c}{\textbf{MAB}} & \multicolumn{4}{c}{\textbf{MSE}} \\
        & $10^2$ & $10^3$ & $10^4$ & $10^5$ & $10^2$ & $10^3$ & $10^4$ & $10^5$ \\
        \midrule
        English & \cellcolor{green5}HT & \cellcolor{green5}HT & \cellcolor{green4}NSB & \cellcolor{green5}NSB & \cellcolor{green5}HT & \cellcolor{green5}HT & \cellcolor{green4}NSB & \cellcolor{green3}NSB \\
        German & \cellcolor{green5}HT & \cellcolor{green5}HT & \cellcolor{green5}NSB & \cellcolor{green5}CS & \cellcolor{green5}HT & \cellcolor{green5}HT & \cellcolor{green5}NSB & \cellcolor{green4}CS \\
        Dutch & \cellcolor{green5}HT & \cellcolor{green5}HT & \cellcolor{green5}NSB & \cellcolor{green4}CS  & \cellcolor{green5}HT & \cellcolor{green5}HT & \cellcolor{green4}NSB & \cellcolor{green4}CS  \\
        \midrule
        Mongolian & \cellcolor{green5}NSB & \cellcolor{green5}HT & \cellcolor{green5}NSB & \cellcolor{green5}NSB & \cellcolor{green4}NSB & \cellcolor{green5}HT & \cellcolor{green5}NSB & \cellcolor{green5}NSB \\
        Tagalog & \cellcolor{green5}HT & \cellcolor{green5}HT & \cellcolor{green5}NSB & \cellcolor{green5}NSB & \cellcolor{green5}HT & \cellcolor{green5}HT & \cellcolor{green5}NSB & \cellcolor{green4}NSB \\
        \bottomrule
    \end{tabular}}
    \caption{The best unigram entropy estimators on the corpora studied, tested on various $N$ averaged over $100$ samples. All differences are statistically significant on the permutation test; lighter color indicates fewer statistically significant comparisons on the Tukey test. \textit{Scale}: significantly better than \colorbox{green5}{6}\colorbox{green4}{5}\colorbox{green3}{4}\colorbox{green2}{3}\colorbox{green1}{2}\colorbox{green0}{1}\colorbox{white}{0} other estimators.}
    \label{tab:unigram}
\end{table*}

\paragraph{CS---\citet{chao2003nonparametric}.}
Chao--Shen modifies HT by multiplying the MLE probability estimates by an estimate of sample coverage.
Formally, let $f_1$ be the number of observed singletons\footnote{A singleton (\textit{hapax legomenon}) is an outcome which is observed only once in the sample.} in sample; our sample coverage can be estimated as $\widehat{C} = 1 - \frac{f_1}{N}$. The CS estimator is then computed as:\looseness=-1
\begin{equation}
   \enthat{cs}(\calD) \defeq -\sum_{k=1}^K \frac{\widehat{C} \cdot \phatxk \log{\widehat{C}\cdot\phatxk}}{1 - (1 - \widehat{C} \cdot\phatxk)^N}
\end{equation}
In the case that $f_1 = N$, we set $f_1 = N - 1$ to ensure the estimated entropy is not $0$.

\paragraph{WW---\citet{wolpert1995estimating}.} 
One family of entropy estimators in information theory is based on Bayesian principles. The first of these was the Wolpert--Wolf estimator, which uses a Dirichlet prior (with concentration parameter $\alpha$ and a uniform base distribution).
This Bayesian estimator has a clean, closed form:
\begin{equation}\label{eq:ww}
    \enthat{ww}(\calD \mid \valpha) \defeq \digamma\left(\widetilde{A} + 1\right)
    - \sum_{k=1}^K\frac{\widetilde{\alpha}_k}{\widetilde{A}} \digamma(\widetilde{\alpha}_k + 1) 
\end{equation}
where $\widetilde{\alpha}_k = \cxk + \alpha_k$ (for  the histogram count $\cxk$ of class $k$ in the sample; this is analogous to Laplace smoothing), $\widetilde{A} = \sum_{k=1}^K \widetilde{\alpha}_k$, and $\digamma$ is the digamma function.
A full derivation of \cref{eq:ww} is given in \cref{prop:ww}.
Unfortunately, \cref{eq:ww} is very dependent on the choice of $\valpha$: For large $K$, $\valpha$ almost completely determines the final entropy estimate, an observation first made by \citet{nsb} which motivated their improved estimator described below.

\paragraph{NSB---\citet{nsb}.}
\citeauthor{nsb} (NSB) attempt to alleviate the Wolpert--Wolf estimator's dependence on $\valpha$.
They take $\valpha = \alpha \cdot \boldsymbol{1}$, enforcing that the Dirichlet prior is symmetric, and develop a hyperprior over $\alpha$ that results in a near-uniform distribution over entropy.
The hyperprior is given by
\begin{equation}\label{eq:nsb-main}
\pnsb(\alpha) \defeq \frac{K\trigamma(K\alpha + 1) - \trigamma(\alpha + 1)}{\log{K}}
\end{equation}
where $\trigamma$ is the trigamma function.
 A full derivation of \cref{eq:nsb-main} is given in \cref{prop:nsb}.
This choice of hyperprior mitigates the effect that the chosen $\alpha$ has on the entropy estimate.
\citeposs{nsb} entropy estimator is then the posterior mean of the Wolpert--Wolft  estimator taken under $\pnsb$:
\begin{equation}\label{eq:nsb-final}
    \enthat{nsb}(\calD) = \int_0^\infty \enthat{ww}(\calD \mid \alpha\cdot \boldsymbol{1})\,\pnsb(\alpha)\, \mathrm{d}\alpha
\end{equation}
Typically, numerical integration is used to quickly compute the unidimensional integral.

\section{Experiments}
Here we provide an evaluation of the entropy estimators presented in \cref{sec:ent} on linguistic data. \looseness=-1

\begin{table*}[t]
    \centering
    \adjustbox{max width=\linewidth}{
    \small
    \begin{tabular}{lr|rrrrrr}
        \toprule
        \textbf{Language} & $n$ & \textbf{MLE} & \textbf{CS} & \textbf{MM} & \textbf{JACK} & \textbf{WW} & \textbf{NSB} \\
        \midrule
        Italian & $16,856$ & \cellcolor{green5}$20.00\%$ & \cellcolor{green5}$15.56\%$ & \cellcolor{green5}$16.43\%$ & \cellcolor{green5}$14.09\%$ & \cellcolor{green5}$19.67\%$ & \cellcolor{green5}$11.41\%$ \\
        Polish & $15,525$ & $30.52\%$ & $23.48\%$ & $25.49\%$ & \cellcolor{green5}$21.75\%$ & $34.68\%$ & \cellcolor{green5}$17.07\%$ \\
        Portuguese & $7,409$ & \cellcolor{green5}$27.60\%$ & \cellcolor{green5}$20.76\%$ & \cellcolor{green5}$22.51\%$ & \cellcolor{green5}$18.81\%$ & \cellcolor{green5}$33.32\%$ & \cellcolor{green5}$14.18\%$ \\
        Spanish & $21,408$ & \cellcolor{green5}$20.50\%$ & \cellcolor{green5}$15.17\%$ & \cellcolor{green5}$16.44\%$ & \cellcolor{green5}$13.80\%$ & \cellcolor{green5}$21.04\%$ & \cellcolor{green5}$10.50\%$ \\
        \midrule
        Arabic & $2,483$ & $45.31\%$ & \cellcolor{green5}$38.49\%$ & $40.99\%$ & \cellcolor{green5}$37.93\%$ & \cellcolor{green5}$49.09\%$ & \cellcolor{green5}$34.82\%$ \\
        Croatian & $13,856$ & $31.35\%$ & $26.04\%$ & $26.62\%$ & $23.08\%$ & $35.66\%$ & $19.06\%$\\
        Greek & $3,305$ & $41.58\%$ & $33.17\%$ & $36.39\%$ & $32.32\%$ & $48.80\%$ & $27.00\%$ \\
        \bottomrule
    \end{tabular}}
    \caption{Normalized mutual information, calculated with several estimators, between adjectives and the inanimate nouns they modify based on UD corpora. Colored-in cell means statistically significant NMI value.}
    \label{tab:williams}
\end{table*}

\subsection{Entropy of the Unigram Distribution}\label{sec:unigram}
We start our study with a controlled experiment where we estimate the entropy of the truncated unigram distribution, the (finite) distribution over the frequent word tokens in a language without regard to context \citep{baayen2016frequency,diessel2017usage,divjak2019frequency,nikkarinen-etal-2021-modeling}.
We renormalize the frequency counts of corpora in English, German, and Dutch \citep[taken from CELEX;][]{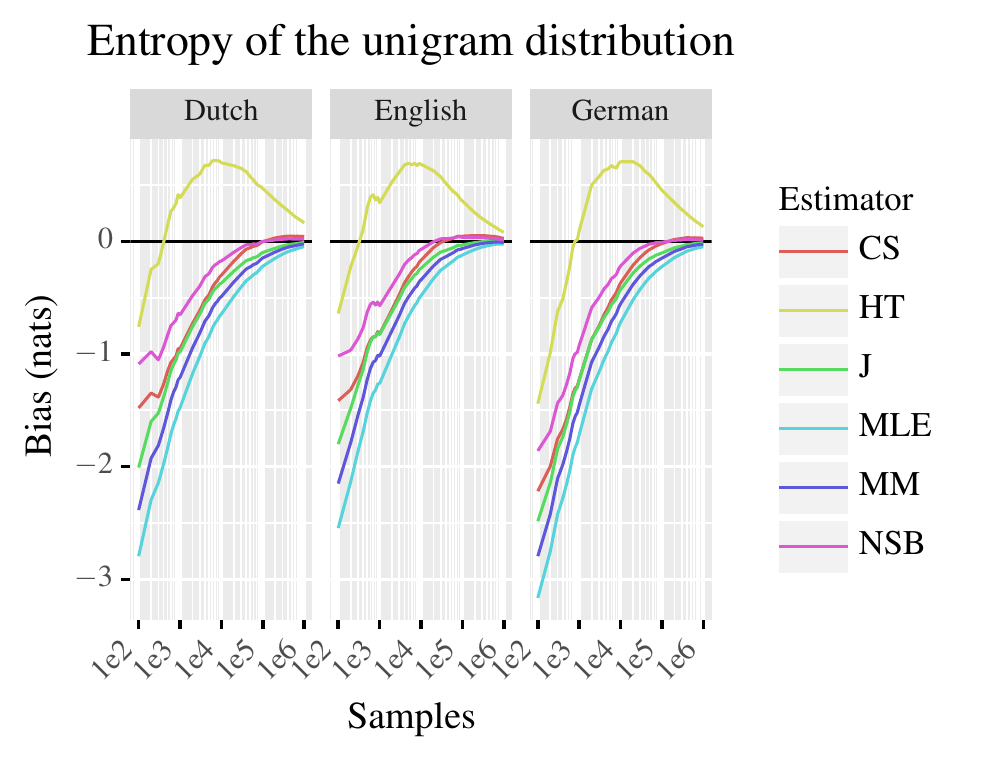}, as well as Mongolian and Tagalog (from Wikipedia\footnote{We used dumps from November 1, 2021: \href{https://dumps.wikimedia.org/tlwiki/20211101/}{Mongolian} and \href{https://dumps.wikimedia.org/mnwiki/20211101/}{Tagalog}; the extracted counts are available in our repository.}).
We take this renormalization as a gold standard distribution, since we cannot access the underlying unigram distribution.
We then draw samples of varying sizes ($N \in \{10^2, 10^3, 10^4, 10^5\}$) from the distribution of renormalized frequency counts to test the estimators' ability to recover the underlying distributions' entropy.
While the renormalized frequency counts are not necessarily representative of the \emph{true} unigram distribution, they nevertheless provide us with a controlled setting to benchmark various entropy estimators.

We evaluate the estimators on both bias and MSE, as defined in \cref{eq:bias,eq:mse}, as well as mean absolute bias (MAB).
To test the statistical significance of differences in metrics between entropy estimators, we use paired permutation tests \cite{perms} (sampling $1,000$ permutations) between pairs of estimators, checking MAB and MSE.
We run \citeauthor{tukey}'s test (\citeyear{tukey}) to judge the statistical significance of differences in MAB and MSE between all pairs of estimators, which found only a few insignificant comparisons when $N$ was large.

Results are shown in \cref{tab:unigram} and \cref{fig:celex_big}. We find that NSB (followed closely by CS) converges almost to the true entropy from below using with only a few samples. HT is the best estimator for $N < 2,000$, but as $N$ increases it tends to overestimate entropy to the point where its bias is greater than that of MLE. 
Besides HT, all estimators at all tested sample sizes $N$ have lower MAB and MSE than MLE.

\subsection{Replication of \citet{williams+al.tacl21} }
Next, we turn to a replication of \citeposs{williams+al.tacl21} information-theoretic study on the association between gendered inanimate nouns and their modifying adjectives.
They estimate mutual information by using its familiar decomposition as the difference of two entropies: $\mathrm{MI}(X; Y) = \ent(X) - \ent(X \mid Y)$.
The entropies $\ent(X)$ and $\ent(X \mid Y)$ are estimated independently and then their difference is computed.
We replicate \citeposs{williams+al.tacl21} experiments using gold-parsed Universal Dependencies corpora, filtering out animate nouns with Multilingual WordNet \citep{bond-foster-2013}.
We rerun their experimental set-up using our full suite of entropy estimators to determine whether the relationship they posit remains significant, checking 3 more languages not in the original study.\looseness=-1

We report results for normalized mutual information (dividing MI by maximum possible MI) in \cref{tab:williams}.
We find that using NSB (the estimator we found most effective in \cref{sec:unigram}) instead of MLE, nearly halves the measured effect in all languages. 
However, the effect remains statistically significant in 5 of 7 languages tested, including the 4 that were also in the original study.

\subsection{Replication of \citet{mccarthy-etal-2020-measuring}}
Finally, we turn our attention to \citeposs{mccarthy-etal-2020-measuring} study on the similarity between grammatical gender partitions between languages. Using information-theoretic measures, they found that closely related languages have more similar gender groupings of core lexical items. We replicate their experiment on Swadesh lists \citep{swadesh1955towards} for 10 European languages with different estimators, and find that hierarchical clustering over both mutual (MI) and variational information (VI) produces the same trees as the original study.
In this case, using NSB, our recommended estimator, results in a reduced estimate of MI (e.g.~Croatian--Slovak: $0.54$ with MLE $\to$ $0.46$ with NSB), but significance testing with 1,000 permutations finds the same pairs were statistically significant for both MI and VI regardless of estimator:~all pairs of Slavic languages and Romance languages, and Bulgarian--Spanish (see \cref{fig:mccarthy}).
Thus, we see a similar result here as in the previous replication.
\section{Conclusion}
This work presents the first empirical study comparing the performance of various entropy estimators for use with natural language distributions.
From experiments on synthetic data (appendix) and natural data (CELEX), and two replication studies of recent papers in information-theoretic linguistics, 
we find that the oft-employed plug-in estimator of entropy can cause misleading results, e.g., the over-estimates of effect sizes seen in both replication studies. 
The recommendation of our paper is that researchers should carefully consider  their choice of entropy estimator based on data availability and the nature of the underlying distribution.


\section*{Ethics Statement}
The authors foresee no ethical concerns with the research presented in this paper.

\section*{Acknowledgments}
We thank Adina Williams, Lucas Torroba Hennigen, Tiago Pimentel, and the anonymous reviewers for feedback on the manuscript.

\bibliography{anthology,custom}
\bibliographystyle{acl_natbib}

\onecolumn
\appendix

\section{Implementation}

The code for each of the entropy estimators is implemented in Python using \texttt{numpy} \citep{harris2020array}, except for NSB which was taken from an existing efficient implementation in the \texttt{ndd} module \citep{marsili}. We calculated entropies with base $e$ (in nats).


\section{Experiments with simulated data}\label{app:sim_data}
In our experiments with simulated data, we explore distributions sampled from a symmetric Dirichlet prior with varying number of classes $K$ and known distributions of Zipfian form with various parameters. Words in natural languages have a roughly Zipfian distribution, with probability inversely proportional to rank \citep{zipf}, and a symmetric Dirichlet distribution is analogous to e.g.~POS tag label distributions in natural language. Thus, studying synthetic data from such distributions as a start is useful.

\subsection{Experiment 1: Symmetric Dirichlet distributions}

We sample $1,000$ distributions from a symmetric Dirichlet distribution with variable number of classes $K$, i.e.~with paramater $\alpha = [\alpha_1, \ldots, \alpha_K] = [1, \ldots, 1]$.
We calculate entropy estimates on different sample sizes $N$. Since we know the parameters of the true distribution, we can compare estimates with the true entropy. We do pairwise comparisons of the MAB and MSE of estimators, using paired permutation tests to establish significance.
\Cref{tab:symmetric_tables} shows our results, including significance tests. It is clear that when $N \gg K$, all of the estimators have nearly converged to the true value and estimator choice does not matter. However, in the low-sample regime some estimators are indeed significantly better at approximating the true entropy. Our results are mixed as to which estimator is best in what context; the one found to be most frequently significantly better than other estimators was Chao--Shen. 
What is clear is that MLE is never the best choice.

\begin{table}[]
    \centering
    \small
    \begin{tabular}{lrrrr|rrrr}
        \toprule
        & \multicolumn{4}{c}{\textbf{MAB}} & \multicolumn{4}{c}{\textbf{MSE}} \\
        & $10^1$ & $10^2$ & $10^3$ & $10^4$ & $10^1$ & $10^2$ & $10^3$ & $10^4$ \\
        \midrule
        $2$ & \cellcolor{green3}HT & \cellcolor{green0}WW & \cellcolor{green0}WW & \cellcolor{green0}WW & \cellcolor{green5}WW & \cellcolor{green2}WW & \cellcolor{green2}WW & \cellcolor{green0}JACK \\
        $5$ & \cellcolor{green5}MM & \cellcolor{green4}WW & WW & JACK & \cellcolor{green5}MM & \cellcolor{green5}WW & \cellcolor{green0}WW & MM \\
        $10$ & \cellcolor{green5}JACK & \cellcolor{green1}CS & \cellcolor{green2}WW & MM & \cellcolor{green5}JACK & \cellcolor{green1}WW & WW & MLE \\
        $100$ & \cellcolor{green5}CS & \cellcolor{green4}CS & \cellcolor{green4}JACK & \cellcolor{green3}WW & \cellcolor{green5}CS & \cellcolor{green4}JACK & \cellcolor{green4}JACK & \cellcolor{green3}WW \\
        $1000$ & \cellcolor{green5}CS & \cellcolor{green5}HT & \cellcolor{green5}CS & \cellcolor{green5}JACK & \cellcolor{green5}CS & \cellcolor{green5}HT & \cellcolor{green5}CS & \cellcolor{green5}JACK \\
        \bottomrule
    \end{tabular}
    \caption{Estimators with least MAB (mean absolute bias) and MSE (mean squared error) for various combinations of $N$ and $K$ sampling from \textbf{symmetric Dirichlet}. The lighter the color the fewer estimators the best estimator was found to be statistically significantly better than.}
    \label{tab:symmetric_tables}
\end{table}

\subsection{Experiment 2: Zipfian distributions}

\begin{table}[]
    \centering
    \small
    \begin{tabular}{lrrrr|rrrr}
        \toprule
        & \multicolumn{4}{c}{\textbf{MAB}} & \multicolumn{4}{c}{\textbf{MSE}} \\
        & $10^1$ & $10^2$ & $10^3$ & $10^4$ & $10^1$ & $10^2$ & $10^3$ & $10^4$ \\
        \midrule
        $100$ & \cellcolor{green5}CS & \cellcolor{green5}CS & \cellcolor{green4}CS & \cellcolor{green3}J & \cellcolor{green5}CS & \cellcolor{green5}CS & \cellcolor{green4}CS & \cellcolor{green3}J\\
        $1000$ & \cellcolor{green4}NSB & \cellcolor{green5}HT & \cellcolor{green4}NSB & \cellcolor{green4}J & \cellcolor{green5}CS & \cellcolor{green5}HT & \cellcolor{green4}NSB & \cellcolor{green5}J\\
        \bottomrule
    \end{tabular}
    \caption{Estimators with least MAB (mean absolute bias) and MSE (mean squared error) for various combinations of $N$ and $K$ sampling from \textbf{Zipfian distributions}.}
    \label{tab:zipf_tables}
\end{table}

We sample $1,000$ finite Zipfian distributions with $K$ classes which obey Zipf's law, that the probability of an outcome is inverse proportional to its rank. The experimental setup is the same as in Experiment 1. A Zipfian distribution approximates (but is not a perfect model of) the distribution of tokens in natural language text in some languages, including English, which was the basis for the law being proposed. Compare similar experiments on infinite Zipf distributions by \citet{zhang2012entropy}. Results are in \cref{tab:zipf_tables}.

\section{Replication of \citet{williams+al.tacl21}}

We used the following UD treebanks:
\begin{itemize}[noitemsep]
    \item \textbf{Arabic}: PADT \citep{praguearabic,taji-etal-2017-universal};
    \item \textbf{Greek}: GDT \citep{Prokopidis05theoreticaland,prokopidis-papageorgiou-2017-universal};
    \item \textbf{Italian}: ISDT \citep{bosco-etal-2013-converting}, VIT \citep{tonelli-etal-2008-enriching};
    \item \textbf{Polish}: PDB \citep{wroblewska-2018-extended};
    \item \textbf{Portuguese}: GSD \citep{mcdonald-etal-2013-universal}, Bosque \citep{rademaker-etal-2017-universal};
    \item \textbf{Spanish}: AnCora \citep{taule-etal-2008-ancora}, GSD \citep{mcdonald-etal-2013-universal}.
\end{itemize}

\section{Additional Figures}

\begin{figure*}[h]
    \centering
    \includegraphics[width=\linewidth]{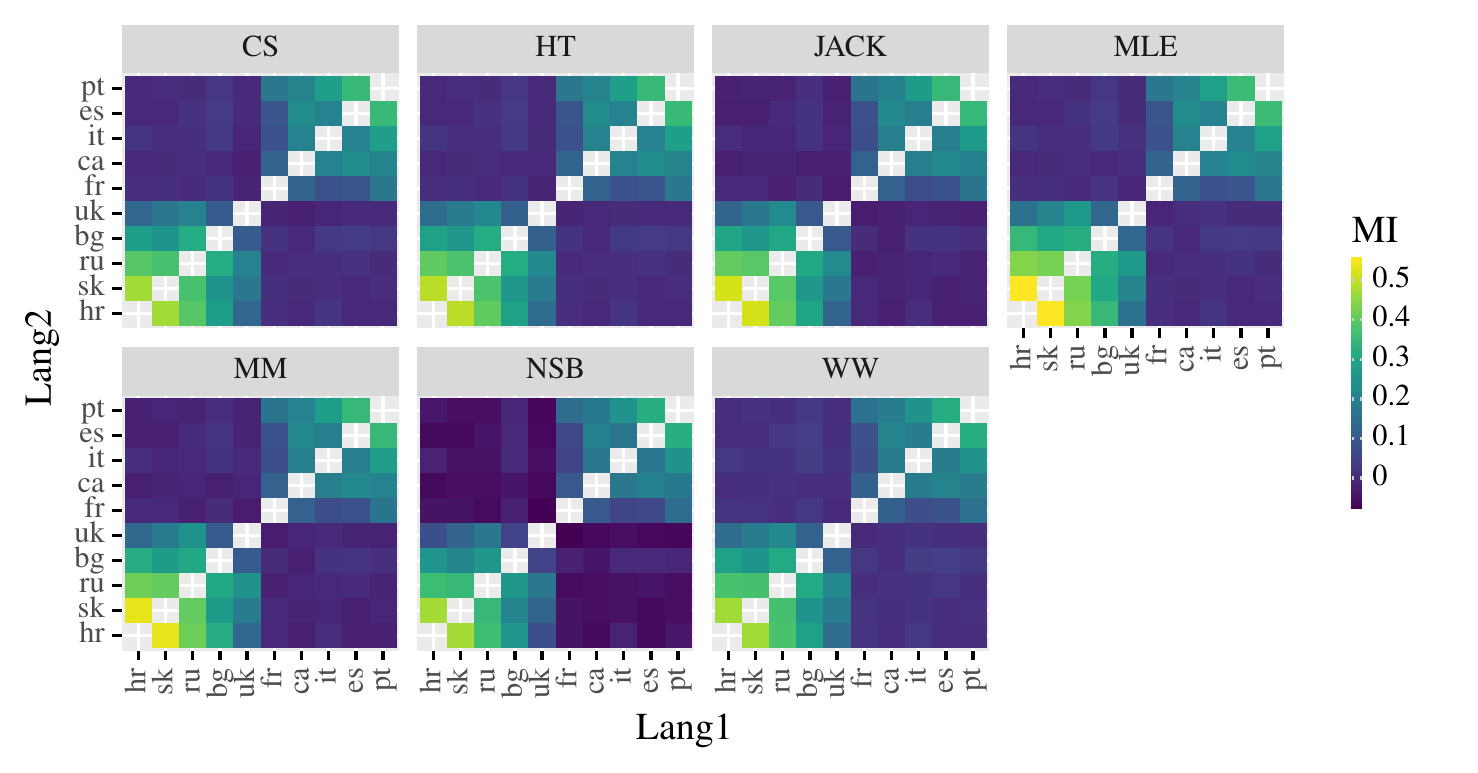}
    \caption{Mutual information between the gender partitions of language pairs with various estimators, replicating \citet{mccarthy-etal-2020-measuring}.}
    \label{fig:mccarthy}
\end{figure*}

\begin{figure*}[h]
    \centering
    \includegraphics[width=\linewidth]{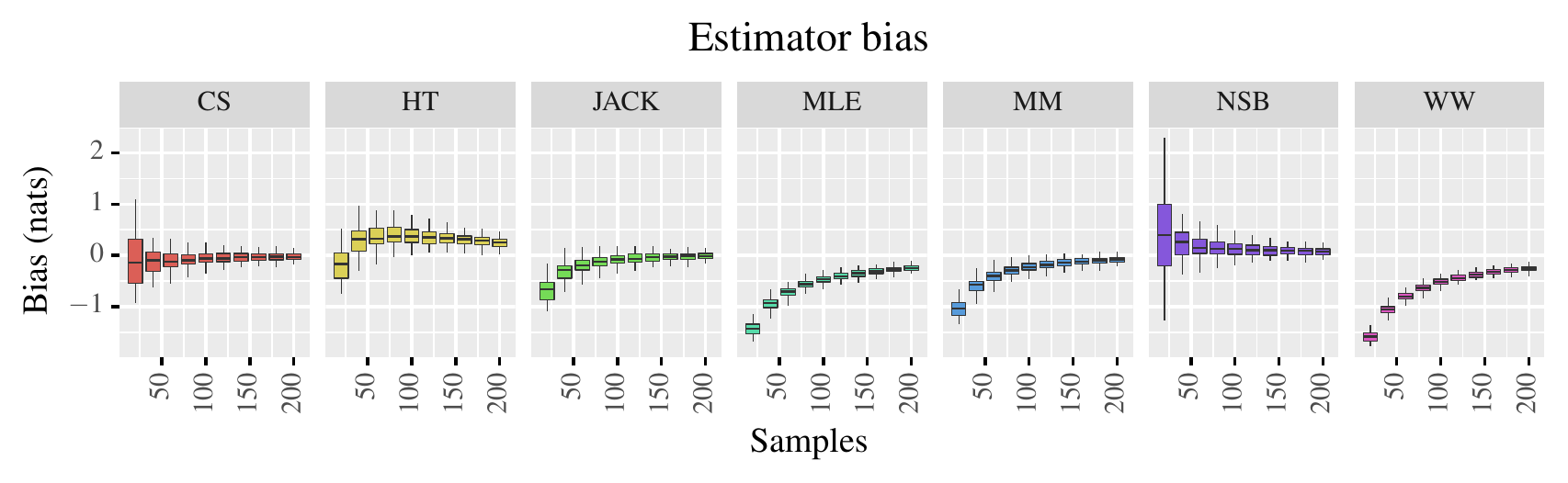}
    \caption{The distribution of bias for entropy over several estimators given variable sample size $N$, sampling from $100$ distributions taken from a symmetric Dirichlet prior with $K = 100$.}
    \label{fig:mle_bias}
\end{figure*}


\begin{figure*}[h]
    \centering
    \includegraphics[width=0.8\textwidth]{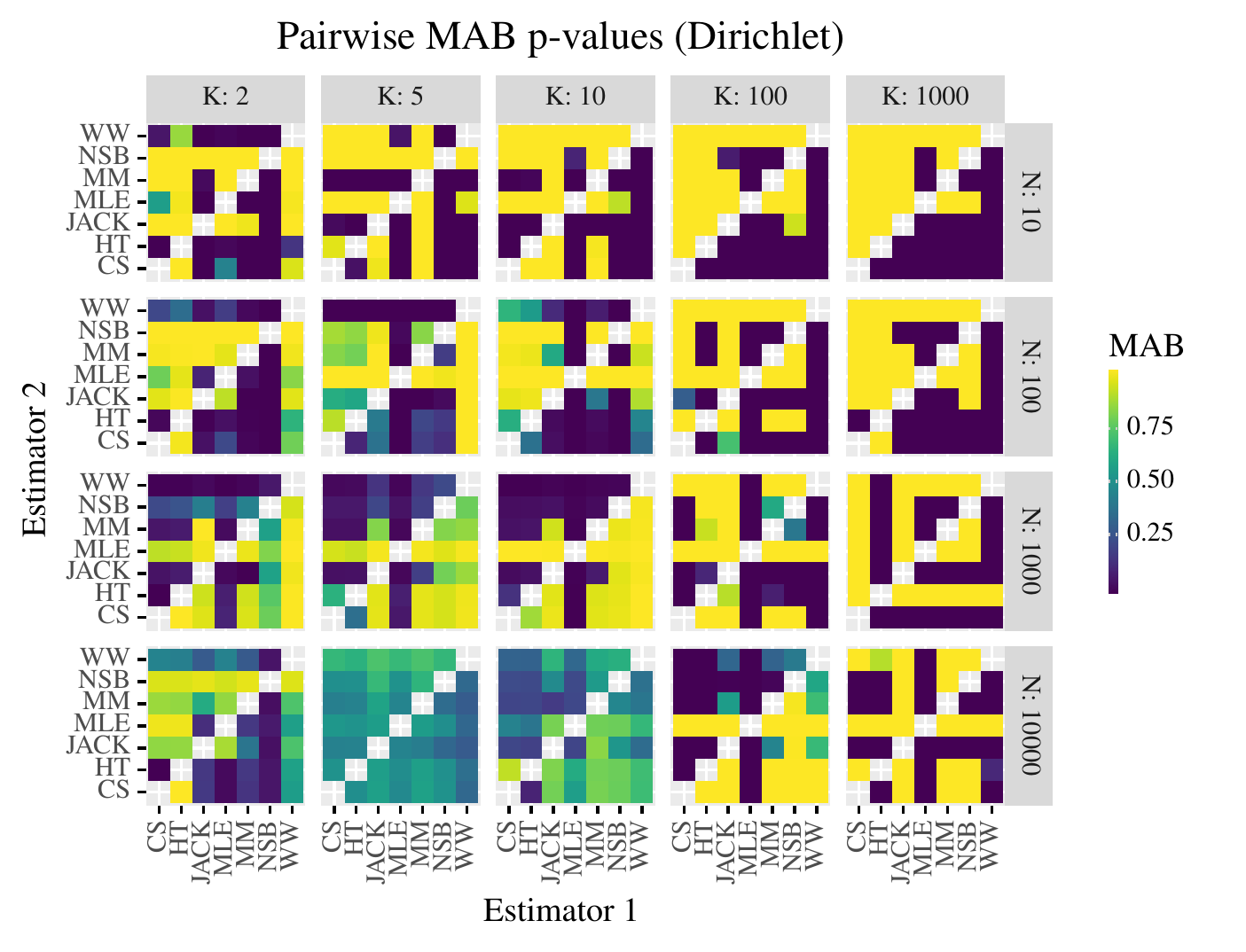}
    \includegraphics[width=0.8\textwidth]{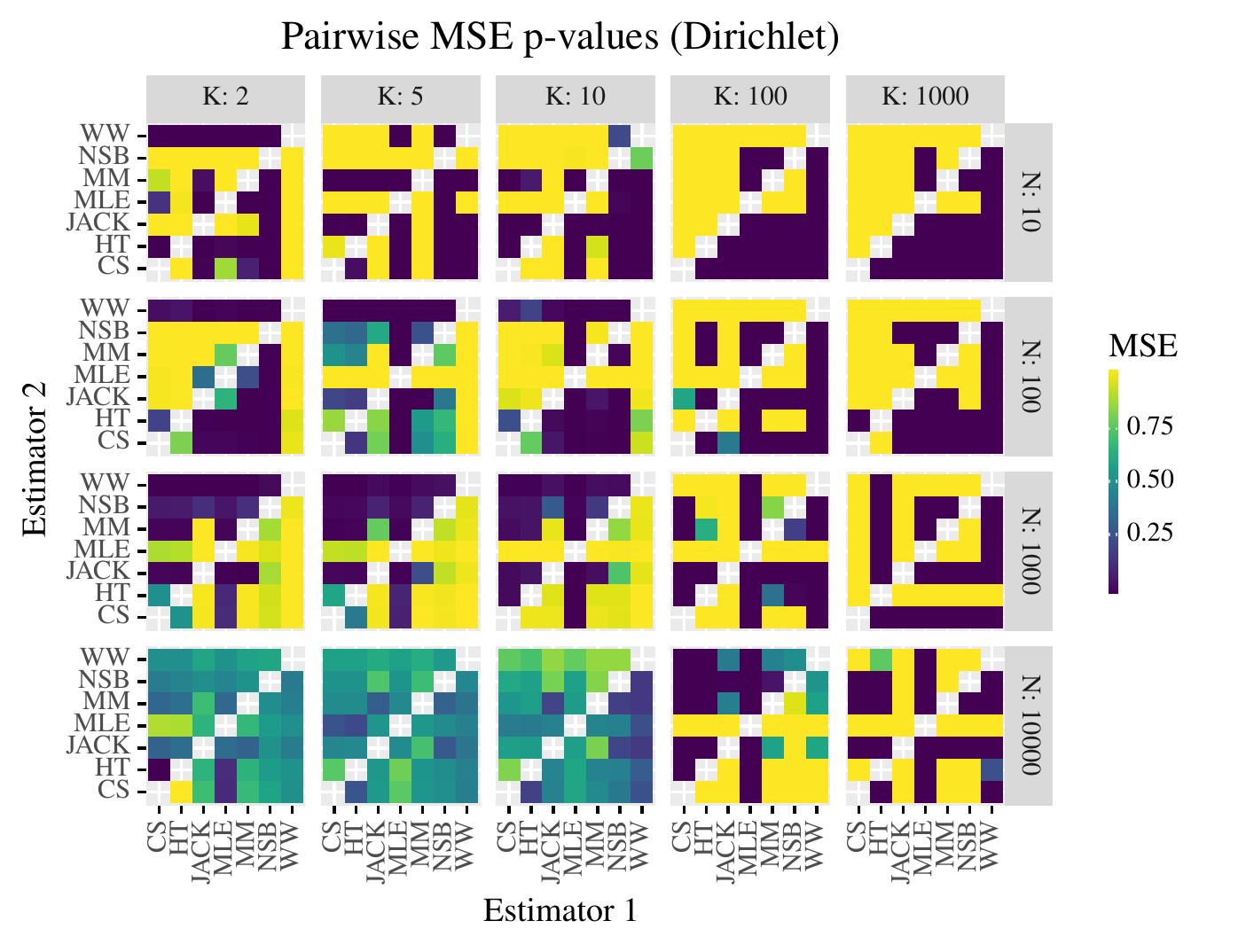}
    \caption{The heatmaps display the $p$-values calculated between pairs of estimators for mean absolute bias (MAB) and mean squared error (MSE) for Experiment 1. More purple values mean the estimator on the $y$-axis (Estimator 2) is better than the estimator on the $x$-axis (Estimator 1). Comparisons tend to become non-significant as $N$ increases, since all the estimators gradually converge to the true entropy.}
    \label{fig:estimators}
\end{figure*}


\newpage
\onecolumn

\section{Derivation of the Entropy Estimators}
Let $\calX = \{\xk\}_{k=1}^K$ be a finite set.
Let $p$ be a distribution over $\calX$.
The \defn{entropy} of $p$ is defined as\looseness=-1
\begin{equation}
    \ent(p) \defeq -\sum_{k=1}^K p_k \log p_k
\end{equation}
Given a dataset of $N$ samples $\calD$ sampled i.i.d.~from $p$, our goal is to estimate the entropy $\ent(p)$ from samples $\calD$ from the true distribution $p$. 
We will denote the count of an item $\xk$ as  $c(\xk) = \sum_{n=1}^N \mathbbm{1}\Big\{\xk = \widetilde{x}^{(n)}\Big\}$.
The \defn{maximum-likelihood estimate} (MLE) of $p$ given $\calD$ is denoted $\frac{\sum_{n=1}^N \mathbbm{1}\{\widetilde{x}^{(n)} = \xk\}}{N}$.
The \defn{plug-in estimate} of $\ent(p)$ is defined to be the estimate of $\ent(p)$ obtained by plugging the MLE estimate $\phat$ directly into the definition of entropy, i.e.,
\begin{equation}\label{eq:entropy-mle}
    \enthatmle = \ent(\phat) = -\sum_{k=1}^K \phatxk \log \phatxk = -\sum_{k=1}^K \frac{\cxk}{N} \log \frac{\cxk}{N}
\end{equation}
This section discusses the problems with \cref{eq:entropy-mle} as an estimator and provides detailed derivations of improved estimators found in the literature.

\subsection{The Plug-in Estimator is Negatively Biased}\label{app:mle_bias}
\begin{proposition}\label{prop:negative-bias}
The MLE entropy estimator in expectation underestimates true entropy, i.e., 
\begin{equation}
    \enthatmle = \mathbb{E}\left[\sum_{k=1}^K-\phatxk\log\phatxk\right] \leq \ent(p)
\end{equation}
\end{proposition}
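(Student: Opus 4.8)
The plan is to derive the result from two facts: that the maximum-likelihood probability estimates are unbiased, and that the entropy functional is concave, so that Jensen's inequality gives the desired direction.

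First I would note that for each outcome $\xk$, the count $\cxk = \sum_{n=1}^N \mathbbm{1}\{\widetilde{x}^{(n)} = \xk\}$ is a sum of $N$ i.i.d.\ Bernoulli$(\pxk)$ indicators, hence $\cxk \sim \mathrm{Binomial}(N, \pxk)$ and $\expectp{\phatxk} = \expectp{\cxk/N} = \pxk$. Thus the random vector $\phat$ is an unbiased estimator of $p$, i.e.\ $\expectp{\phat} = p$ coordinatewise, and moreover $\phat$ lies in the probability simplex almost surely. Next I would establish that the map $\vpi \mapsto \ent(\vpi) = -\sum_{k=1}^K \pi_k \log \pi_k$ is concave on the simplex: the scalar function $t \mapsto -t\log t$ has second derivative $-1/t \leq 0$ on $(0,1]$ and is therefore concave, and a sum of concave functions of separate coordinates is concave, a property preserved under restriction to the simplex.

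With these two ingredients in hand, the final step is to apply the (multivariate) Jensen inequality to the concave function $\ent$ and the random vector $\phat$, yielding $\expectp{\ent(\phat)} \leq \ent(\expectp{\phat})$. Substituting $\expectp{\phat} = p$ gives $\expectp{\ent(\phat)} \leq \ent(p)$, which is exactly the statement, since by definition $\enthatmle = \ent(\phat)$ and the expectation in the proposition is $\expectp{\enthatmle}$.

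The argument is essentially routine, so there is no serious obstacle; the only points requiring mild care are (i) invoking Jensen's inequality in its multivariate form, for which it suffices that $\ent$ is concave on the convex set containing the support of $\phat$, and (ii) observing, if one wants the strict inequality, that $-t\log t$ is in fact strictly concave, so equality holds only when $\phat$ equals $p$ almost surely — a degenerate case that essentially never occurs in practice. This strictness is what makes the bias systematic rather than incidental, motivating the corrected estimators discussed in the subsequent subsections.
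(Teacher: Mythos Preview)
Your proof is correct and follows essentially the same approach as the paper: unbiasedness of $\phat$, concavity of $-t\log t$, and Jensen's inequality. The only cosmetic difference is that the paper applies the scalar Jensen inequality term by term to each summand $-\phatxk\log\phatxk$, whereas you bundle the sum into a single multivariate application of Jensen to the concave functional $\ent(\cdot)$; since the functional is separable, the two arguments are equivalent.
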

\begin{proof}
The result is a simple consequence of Jensen's inequality and some basic manipulations:
\begin{align*}
 \mathbb{E}\left[\sum_{k=1}^K-\phatxk\log\phatxk\right] &=  \sum_{k=1}^K \mathbb{E}[-\phatxk\log\phatxk] & \text{(linearity of expectation)} \\
 &\leq- \sum_{k=1}^K\mathbb{E}[\phatxk]\log\mathbb{E}[\phatxk] & \text{(Jensen's inequality)}\\
    &= -\sum_{k=1}^K \pxk\log \pxk & \text{($\mathbb{E}[\phatxk]  = \pxk$)}\\
    &= \ent(p) & \text{(definition of entropy)}
\end{align*}
This completes the result.
\end{proof}

\subsection{Miller--Madow}

\begin{proposition}\label{prop:miller-madow}
Let $p$ be a categorical distribution over $\calX = \{x_1, \ldots, x_K\}$, i.e., a categorical distribution with support $K$.
\defndata{} Finally, let $\phat$ be the maximum-likelihood estimate computed on $\calD$. 
Then, we have
\begin{align}
  \bias{\enthatmle} &\defeq \expectp{\enthatmle} - \ent(p) \\
   &=-\frac{K-1}{2N} + o\left(N^{-1}\right)
\end{align}\
\end{proposition}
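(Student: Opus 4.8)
The plan is to expand the plug-in entropy estimator around the true distribution via a second-order Taylor expansion. Write $\phatxk = \pxk + \Delta_k$ where $\Delta_k = \phatxk - \pxk$, so that $\expectp{\Delta_k} = 0$ and, since $N\phatxk \sim \mathrm{Binomial}(N, \pxk)$, we have $\expectp{\Delta_k^2} = \frac{\pxk(1-\pxk)}{N}$. Let $g(t) = -t\log t$, so that $\enthatmle = \sum_k g(\phatxk)$ and $\ent(p) = \sum_k g(\pxk)$. Taylor-expanding $g$ about $\pxk$ gives
\begin{equation*}
  g(\phatxk) = g(\pxk) + g'(\pxk)\,\Delta_k + \tfrac{1}{2} g''(\pxk)\,\Delta_k^2 + R_k,
\end{equation*}
where $g''(t) = -1/t$ and $R_k$ is a higher-order remainder. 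Taking expectations, the linear term vanishes, and summing over $k$ yields
\begin{equation*}
  \expectp{\enthatmle} - \ent(p) = \tfrac{1}{2}\sum_{k=1}^K \left(-\tfrac{1}{\pxk}\right)\frac{\pxk(1-\pxk)}{N} + \sum_{k=1}^K \expectp{R_k} = -\frac{1}{2N}\sum_{k=1}^K (1 - \pxk) + \sum_{k=1}^K \expectp{R_k}.
\end{equation*}
Since $\sum_k (1-\pxk) = K - 1$, the leading term is exactly $-\frac{K-1}{2N}$, which is the claimed bias.

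The remaining step is to show $\sum_k \expectp{R_k} = o(N^{-1})$. Here I would control the cubic term of the expansion: $g'''(t) = 1/t^2$, so the third-order term contributes something of order $\sum_k \frac{1}{\pxk^2}\expectp{\Delta_k^3}$, and the central third moment of a (scaled) binomial is $\expectp{\Delta_k^3} = \frac{\pxk(1-\pxk)(1-2\pxk)}{N^2} = O(N^{-2})$, so this whole term is $O(N^{-2}) = o(N^{-1})$ for fixed support $K$ and fixed $p$ with all $\pxk > 0$. One has to be slightly careful about the Lagrange remainder when $\phatxk$ is close to $0$ (where $g$ and its derivatives blow up): the cleanest route is to note $\phatxk$ takes values in the finite set $\{0, 1/N, \dots, 1\}$, handle the event $\phatxk = 0$ separately (it contributes $g(0) = 0$ on that event, and has probability $(1-\pxk)^N$, exponentially small), and on the complement bound the remainder using that $\phatxk \geq 1/N$ there. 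Assembling these pieces gives the $o(N^{-1})$ control uniformly.

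The main obstacle I anticipate is precisely this remainder bookkeeping near the boundary $\phatxk \to 0$: a naive application of Taylor's theorem with the Lagrange form of the remainder fails because $g'''$ is unbounded on $(0,1]$. The fix is the finiteness-of-support / exponential-tail argument sketched above, or alternatively an exact moment computation — since everything is a polynomial-in-$1/N$ expansion of moments of a binomial, one can in principle compute $\expectp{g(\phatxk)}$ term by term and read off the $N^{-1}$ coefficient directly. I would also note the connection (and cite it) to the finite-support expansions in the literature; the statement here only claims an $o(N^{-1})$ error term, so we do not need the full asymptotic series, just the leading correction and a crude bound on everything beyond it.
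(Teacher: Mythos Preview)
Your argument is correct and lands on the same leading term by the same mechanism (a second-order Taylor expansion of $-t\log t$), but the paper organizes the computation differently. The paper first establishes the \emph{exact} identity $\enthatmle = \ent(\phat,p) - \KL(\phat \mid\mid p)$ (this is what the first-order expansion with remainder in \cref{lemma:taylor} amounts to), takes expectations to obtain $\bias{\enthatmle} = -\expectp{\KL(\phat \mid\mid p)}$ with no approximation, and only then Taylor-expands the KL term to second order (\cref{lemma:kl-taylor}) to extract $\frac{K-1}{2N}$. Your direct expansion of $g(t)=-t\log t$ is the same calculation unbundled: your zeroth-plus-first-order piece is exactly $\ent(\phat,p)$, and your quadratic-plus-remainder is exactly $-\KL(\phat \mid\mid p)$. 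The paper's detour buys the clean intermediate statement that the bias is precisely the negative expected KL between the empirical and true distributions, which is conceptually nice and immediately gives non-positivity; your route is a bit more elementary and, notably, you are more honest about the remainder near $\phatxk=0$ than the paper, which simply pushes an expectation through a little-$o$ without comment. Your sketch of handling the boundary via the exponential tail of $\{\phatxk=0\}$ and the lower bound $\phatxk\geq 1/N$ on the complement is the right way to make that step rigorous.
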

\begin{proof}
We start by taking a first-order Taylor expansion and take an expectation of both sides.
\begin{align}
    \enthatmle &= \underbrace{\ent(\phat, p)}_{\text{cross-entropy}} - \KL(\phat \mid\mid p) & \text{(\cref{lemma:taylor})} \\
    \expectp{\enthatmle} &= \expectp{\ent(\phat, p)} - \expectp{\KL(\phat \mid\mid p)} & \text{(expectation)} \\
    &= \expectp{-\sum_{k=1}^{K}{\phatxk\log{\pxk}}} - \expectp{\KL(\phat \mid\mid p)} & \text{(defn.~$\ent(p,q)$)} \\
    &= -\sum_{k=1}^{K}{\expectp{\phatxk\log{\pxk}}} - \expectp{\KL(\phat \mid\mid p)} & \text{(linearity)} \\ 
    &= -\sum_{k=1}^{K}{\expectp{\phatxk}\log{\pxk}} - \expectp{\KL(\phat \mid\mid p)} & \text{(algebra)} \\ 
    &= -\sum_{k=1}^{K}{\pxk\log{\pxk}} - \expectp{\KL(\phat \mid\mid p)} & %
    \text{(unbiased)} \\
    &= \ent(p)- \expectp{\KL(\phat \mid\mid p)} & \text{(defn.~of $\ent(p)$)}\\
\end{align}
This gives us:
\begin{align}
    \expectp{\enthatmle} - \ent(p) &= - \expectp{\KL(\phat \mid\mid p)} & \text{(subtract $\ent(p)$)}
\end{align}
Thus, we may compactly write the bias as:
\begin{align}
    \bias{\enthatmle} &=  \expectp{\ent(\phat)} - \ent(p) & \text{(definition of bias)} \\
    &= - \expectp{\KL(\phat \mid\mid p)} & \text{(above computation)}\\
     & \leq 0 & \text{(non-negativity of $\KL$)}
\end{align}
Now, we find a simpler expression for the remainder $\expectp{\KL(\phat \mid\mid p)}$.
Again, we start with a second-order Taylor expansion
\begin{align}
     \KL(p \mid\mid q) &=\sum_{x \in \calX}\frac{\Delta(x)^2}{2q(x)} + o\left(\Delta(x)^2\right)  & \text{(\cref{lemma:kl-taylor})}
\end{align}
around the point $\Delta(x) = p(x) - q(x)$.
Define $\phatxk = \frac{\cxk}{N}$ where $\cxk$ is the count of $\xk$ in the training set. We now simplify the first term:
\begin{align}
\expectp{\sum_{k=1}^K \frac{\Delta(\xk)^2}{2q(\xk)}}  &=\expectp{\sum_{k=1}^K \frac{(\phatxk - \pxk)^2}{2\pxk}} & \text{(definition of $\Delta(\xk)$)} \\
&= \expectp{\sum_{k=1}^K \frac{(\frac{\cxk}{N} - \pxk)^2}{2\pxk}} & \text{(definition of MLE)}\\
&= \expectp{\sum_{k=1}^K \frac{(\cxk- N \pxk)^2}{2N^2 \pxk}} & \text{($\times \sfrac{N}{N}$)} \\
&= \frac{1}{2N}\expectp{\sum_{k=1}^K \frac{(\cxk- N \pxk)^2}{N \pxk}} & \text{(pulling out $\sfrac{1}{2N}$)}\\
&= \frac{1}{2N}\expectp{\sum_{k=1}^K \frac{\splitfrac{\cxk^2 - 2\cxk N \pxk}{ + N^2 \pxk^2}}{N \pxk}} & \text{(exp.~the binomial)}\\
&= \frac{1}{2N}\sum_{k=1}^K \frac{\splitfrac{\expectp{\cxk^2} - 2N\pxk\expectp{\cxk}}{ + N^2 \pxk^2}}{N \pxk} & \text{(lin.~of expect.)} \\
&= \frac{1}{2N}\sum_{k=1}^K \frac{\splitfrac{N p_k (1-\pxk) + N^2\pxk^2}{ - 2N^2\pxk^2 + N^2 \pxk^2}}{N \pxk} & \text{(moments of MLE)} \\
\begin{split}
&= \frac{1}{2N}\sum_{k=1}^K \frac{N p_k (1-\pxk)}{N \pxk} \\&\qquad+ \underbrace{\frac{1}{2N}\sum_{k=1}^K \frac{ N^2\pxk^2 - 2N^2\pxk^2 + N^2 \pxk^2}{N \pxk}}_{=0}
\end{split}
\\
&= \frac{1}{2N}\sum_{k=1}^K \frac{\cancel{N \pxk} (1-\pxk)}{\cancel{N \pxk}} & \\
&= \frac{1}{2N}\sum_{k=1}^K (1-\pxk) & \text{(algebra)} \\
&= \frac{1}{2N}\underbrace{\sum_{k=1}^K 1}_{=K} - \frac{1}{2N}\underbrace{\sum_{k=1}^K \pxk}_{=1}& \text{(algebra)} \\
&= \frac{K-1}{2N}
\end{align}
Next, we simplify the second term, $o\left(\Delta(x)^2\right)$, in the MLE case:
\begin{align}
    \expectp{o\left(\Delta(x)^2\right)} &= \expectp{o\left((\phatxk - \pxk)^2\right)} & \text{(definition of $\Delta$)} \\
    &= \expectp{o\left(\left(\frac{\cxk}{N} - \pxk\right)^2\right)} & \text{(definition of MLE)}  \\
    &= \expectp{o\left(\frac{(\cxk - N \pxk)^2}{N^2}\right)} & \text{($\times \sfrac{N}{N}$)} \\
    &= \expectp{o\left(\frac{\cxk^2 - 2 \cxk N \pxk + N^2 \pxk^2}{N^2}\right)}\\
    &= o\left(\frac{\expectp{\cxk^2 - 2 \cxk N \pxk + N^2 \pxk^2}}{N^2}\right) & \text{(push exp.~through)}  \\
    &= o\left(\frac{\splitfrac{Np_k(1-\pxk) + N^2\pxk^2 }{- 2 N ^2\pxk^2 + N^2 \pxk^2}}{N^2}\right) \\
    &=o\left(\frac{N\pxk(1-\pxk)}{N^2}\right) & \text{(cancel terms)} \\
    &=o\left(\frac{\pxk(1-\pxk)}{N}\right) & \text{(cancel $N$ in fraction)} \\
    &=o\left(N^{-1}\right) & \text{(ignore constants)}
    \end{align}
Putting it all together, we get that $\bias{\ent(\phat)} = -\frac{K-1}{2N} + o\left(N^{-1}\right)$
which is the desired result.
\end{proof}
Interestingly, it can be seen that the negative bias of the MLE gets worse as the number of classes $K$ grows. 
Distributions with large $K$ pop up frequently when dealing with natural language. 
\begin{corollary}
The plug-in estimator of entropy is consistent.
\end{corollary}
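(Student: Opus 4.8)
The cleanest route is via the continuous mapping theorem, which exploits the fact that $K$ is fixed and finite. First I would note that each coordinate $\phatxk = \cxk/N$ of the plug-in distribution is an empirical mean of i.i.d.\ Bernoulli indicators $\mathbbm{1}\{\widetilde{x}^{(n)} = \xk\}$, so by the strong law of large numbers $\phatxk \to \pxk$ almost surely; since there are only $K$ coordinates, $\phat \to p$ almost surely as points in the probability simplex. Next I would observe that the entropy functional $\vpi \mapsto -\sum_{k=1}^K \pi_k \log \pi_k$ is continuous on the \emph{entire} simplex once we adopt the convention $0\log 0 = 0$, because $t \mapsto t\log t$ extends continuously to $[0,1]$. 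Applying the continuous mapping theorem then yields $\enthatmle = \ent(\phat) \to \ent(p)$ almost surely, and a fortiori in probability, which is exactly consistency.

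An alternative that leans directly on \cref{prop:miller-madow} is to show that the MSE vanishes. By \cref{eq:mse}, $\mathrm{MSE}(\enthatmle) = \bias{\enthatmle}^2 + \mathrm{var}(\enthatmle)$; the proposition gives $\bias{\enthatmle} = -\frac{K-1}{2N} + o(N^{-1}) \to 0$ because $K$ is constant, so the squared-bias term dies. For the variance I would use a bounded-differences estimate: changing one sample $\widetilde{x}^{(n)}$ alters at most two counts by one and hence perturbs $\enthatmle$ by $O(\tfrac{\log N}{N})$, so the Efron--Stein (or McDiarmid) inequality gives $\mathrm{var}(\enthatmle) = O\!\big(\tfrac{(\log N)^2}{N}\big) \to 0$. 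Then $\mathrm{MSE}(\enthatmle) \to 0$, which implies convergence in probability.

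Neither argument is hard; the one place that requires care is the boundary of the simplex, where some $\pxk = 0$ --- there one must invoke the continuous extension of $t\log t$ at $0$ (for the first proof) or the corresponding control on increments (for the variance bound). I expect this to be the only real subtlety, and it is routine: the finiteness of the support $K$ is what makes consistency essentially immediate, and the same reasoning makes clear that the argument would break for the growing-$K$ regimes emphasized elsewhere in the paper.
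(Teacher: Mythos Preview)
Your proposal is correct and, if anything, more careful than the paper's own proof. The paper simply observes that the bias $-\frac{K-1}{2N}+o(N^{-1})$ from \cref{prop:miller-madow} tends to zero and declares consistency on that basis, then remarks in one sentence that the continuous mapping theorem would also work. Your first route \emph{is} that continuous-mapping argument, fully spelled out (SLLN on each coordinate, continuity of $t\mapsto t\log t$ on $[0,1]$ with the $0\log 0=0$ convention), so on that branch you and the paper coincide. Your second route is effectively a completion of the paper's primary argument: vanishing bias alone does not imply convergence in probability, and you supply the missing piece---a variance bound via bounded differences/Efron--Stein giving $\mathrm{var}(\enthatmle)=O((\log N)^2/N)$---so that the full MSE tends to zero. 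Both of your arguments are sound; the subtlety you flag at the simplex boundary is handled correctly.
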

\begin{proof}
From \cref{prop:miller-madow}, we have $\bias{\ent(\phat)} =-\frac{K-1}{2N} + o\left(N^{-1}\right)$. 
Clearly, as $N \rightarrow 0$, we have $\bias{\ent(\phat)} \rightarrow 0$, so the estimator is consistent.
One could also prove consistency through a simple application of the continuous mapping theorem.
\end{proof}

\begin{estimator}[Miller--Madow]
Let $p$ be a categorical over $K$ categories.
We seek to estimate the entropy $\ent(p)$.
\defndata{}
Then, the Miller--Madow estimator of $\ent(p)$ is given by\looseness=-1
\begin{equation}
   \enthat{mm}(\calD) \defeq \enthatmle + \frac{K-1}{2N}
\end{equation}
The Miller--Madow estimator is biased, however it is consistent.
\end{estimator}

\begin{lemma}\label{lemma:taylor}
The the first-order Taylor approximation of $\enthatmle$ around the distribution $p$ is given by\looseness=-1
\begin{equation}
\enthatmle = \ent(\phat, p) + R(p, \phat)
\end{equation}
where the remainder $R$ is given by
\begin{equation}
     R(p, \phat) = -\KL(\phat \mid\mid p)
\end{equation}
\end{lemma}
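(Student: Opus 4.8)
The plan is to treat the statement in two moves: first derive the claimed first-order term by a genuine Taylor expansion of the entropy functional about $p$, and then observe that the resulting ``remainder'' collapses exactly to a KL divergence by direct algebra.

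First I would write $\phat = p + \Delta$ with $\Delta(\xk) \defeq \phatxk - \pxk$, and regard the plug-in estimate as the entropy functional $g(q) = -\sum_{k=1}^K q(\xk)\log q(\xk)$ evaluated at $\phat$. Since $\frac{\partial}{\partial q_k}\bigl(-q_k\log q_k\bigr) = -\log q_k - 1$, the first-order Taylor expansion of $g$ about $p$ gives $g(\phat) = \ent(p) - \sum_{k=1}^K(\log\pxk + 1)\,\Delta(\xk) + (\text{higher-order terms})$. The key simplification is that $\sum_{k=1}^K \Delta(\xk) = \sum_k\phatxk - \sum_k\pxk = 0$, because both $\phat$ and $p$ are probability distributions, so the constant-$1$ contribution to the linear term vanishes. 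What survives is $-\sum_k \log\pxk\,\Delta(\xk) = -\sum_k\phatxk\log\pxk + \sum_k\pxk\log\pxk$, and combining this with $\ent(p) = -\sum_k\pxk\log\pxk$ leaves exactly $-\sum_k\phatxk\log\pxk = \ent(\phat, p)$, the cross-entropy of $\phat$ relative to $p$. This identifies the first-order Taylor approximation of $\enthatmle$ around $p$ as $\ent(\phat, p)$, which is the first displayed equation of the lemma.

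Next I would pin down the remainder exactly, defining $R(p,\phat) \defeq \enthatmle - \ent(\phat, p)$, and verify its closed form by expanding the logarithm of a ratio: $R(p,\phat) = -\sum_k\phatxk\log\phatxk + \sum_k\phatxk\log\pxk = -\sum_k\phatxk\log\frac{\phatxk}{\pxk} = -\KL(\phat \mid\mid p)$, which is precisely the claimed formula. Note this makes the lemma an exact identity $\enthatmle = \ent(\phat, p) - \KL(\phat \mid\mid p)$, not merely an approximation.

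I do not anticipate a genuine obstacle here; the one subtlety worth flagging is conceptual rather than technical: unlike a typical Taylor remainder, $R(p,\phat)$ is not an asymptotically negligible error but an exact quantity with a clean form, which is exactly why this lemma is the right lever for the bias computation in \cref{prop:miller-madow} --- there, it is the \emph{second}-order expansion of $\KL(p\mid\mid q)$ in \cref{lemma:kl-taylor} that produces the $\frac{K-1}{2N}$ leading term and the $o(N^{-1})$ tail. I would also be careful to state the sign conventions for cross-entropy and KL divergence up front, since the entire identity is just a disciplined bookkeeping of those signs.
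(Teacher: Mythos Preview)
Your proof is correct and follows essentially the same route as the paper: a first-order Taylor expansion of the entropy functional at $p$, the observation that $\sum_k \Delta(\xk)=0$ kills the constant piece of the gradient so that the zeroth- plus first-order terms collapse to the cross-entropy $\ent(\phat,p)$, and then a direct computation showing the exact remainder is $-\KL(\phat\mid\mid p)$. Your additional remark that the identity is exact (not merely asymptotic) is a useful clarification the paper leaves implicit.
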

\begin{proof}
The result follows from direct computation. We start by taking the Taylor expansion of $\ent(\phat)$ around $\ent(p)$:
\begin{equation}
     \enthatmle = \ent(p) + \sum_{k=1}^K \frac{\partial}{\partial \pxk} \Big[ \ent(p)\Big] \Big(\phatxk - \pxk\Big) + \underbrace{R(p, \phat)}_{\text{remainder}} 
\end{equation}
   Our first order term can then be rewritten as follows:
    \begin{align}
    \sum_{k=1}^K \frac{\partial}{\partial \pxk} &\Big[ \ent(p)\Big] \Big(\phatxk - \pxk\Big)\\
    &=   \sum_{k=1}^K \frac{\partial}{\partial \pxk} \left[\sum_{k'=1}^K- p(x_{k'})\log p(x_{k'}) \right] \Big(\phatxk - \pxk\Big)   \\
        &=   \sum_{k=1}^K  \left[\sum_{k'=1}^K- \frac{\partial}{\partial \pxk} p(x_{k'})\log p(x_{k'}) \right] \Big(\phatxk - \pxk\Big)   & \text{(linearity)} \\
            &=   \sum_{k=1}^K  \left[\sum_{k'=1}^K \frac{\partial}{\partial \pxk} p(x_{k'})\log p(x_{k'}) \right] \Big(\pxk - \phatxk \Big)   & \text{(sign)} \\
        &=   \sum_{k=1}^K   \Big(1 + \log \pxk\Big)\Big(\pxk - \phatxk \Big)   &  \\
       &=   \sum_{k=1}^K \Big(\pxk - \phatxk \Big) + \log \pxk \left(\pxk - \phatxk \right)  \\
        &=   \sum_{k=1}^K \Big(\pxk - \phatxk \Big) + \sum_{k=1}^K \log \pxk \left(\pxk - \phatxk \right)    \\
        &=   \underbrace{\sum_{k=1}^K \pxk}_{=1} - \underbrace{\sum_{k=1}^K \phatxk}_{=1} + \sum_{k=1}^K \log \pxk \left(\pxk - \phatxk \right)   & \text{(distrib.~sum)}\\
        &=   \sum_{k=1}^K  \log \pxk \left(\pxk - \phatxk \right)   & \text{(simplify)} \\
        &=   \underbrace{\sum_{k=1}^K  \log \pxk \pxk}_{-\ent(p)} - \underbrace{\sum_{k=1}^K  \log \pxk \phatxk}_{\ent(p, \phat)}    & \text{(distrib.~sum)} \\
        &=  \ent(p, \phat) -\ent(p)  
\end{align}
Plugging this back into our Taylor expansion, we get the following:
\begin{equation}
        \enthatmle = \cancel{\ent(p)} - \cancel{\ent(p)} + \ent(p, \phat) + R(p, \phat) 
\end{equation}
Now, we see that this implies
\begin{align}
    R(p, \phat) &= \enthatmle - \ent(\phat, p) & \text{(algebra)} \\
    &= -\sum_{k=1}^K{\phatxk \log{\phatxk}} + \sum_{k=1}^K{\phatxk \log{\pxk}} & \text{(defn.)} \\
    &= -\sum_{k=1}^K{\left(\phatxk \log{\phatxk} - \phatxk \log{\pxk}\right)} & \text{(merge sums)} \\
    &= -\sum_{k=1}^K{\phatxk(\log{\phatxk} - \log{\pxk})} & \text{(factor out $\phatxk$)} \\
    &= -\sum_{k=1}^K{\phatxk\log{\frac{\phatxk}{\pxk}}} & \text{($\log$ algebra)} \\
    &= -\KL(\phat \mid\mid p) & \text{(defn.)}
\end{align}
which is the desired result.
\end{proof}

\begin{lemma}\label{lemma:kl-taylor}
Define $\Delta(x) = p(x) - q(x)$. 
The second-order Taylor expansion of $\KL(p \mid\mid q)$ around $\Delta(x)$ is given
by 
\begin{equation}
     \KL(p \mid\mid q) =\sum_{x \in \calX}\frac{\Delta(x)^2}{2q(x)} + o\left(\Delta(x)^2\right)
\end{equation}
\end{lemma}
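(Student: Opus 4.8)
The plan is to expand $\KL(p \mid\mid q) = \sum_{x \in \calX} p(x) \log \frac{p(x)}{q(x)}$ termwise by writing $p(x) = q(x) + \Delta(x)$ and Taylor-expanding the scalar function $g(t) = (q + t)\log\frac{q+t}{q}$ in the increment $t = \Delta(x)$ around $t = 0$. First I would record $g(0) = 0$, so there is no zeroth-order contribution. Then I would compute $g'(t) = \log\frac{q+t}{q} + 1$, giving $g'(0) = 1$; summing the first-order terms yields $\sum_{x} \Delta(x) = \sum_x p(x) - \sum_x q(x) = 1 - 1 = 0$, so the first-order contribution also vanishes. This vanishing of the two lowest-order terms is the reason the leading behavior is quadratic, and it is worth stating explicitly.

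Next I would compute the second derivative, $g''(t) = \frac{1}{q+t}$, so $g''(0) = \frac{1}{q(x)}$, and the second-order term of the expansion of $g$ at each $x$ is $\frac{1}{2} g''(0)\,\Delta(x)^2 = \frac{\Delta(x)^2}{2 q(x)}$. Summing over $x \in \calX$ produces the stated main term $\sum_{x \in \calX} \frac{\Delta(x)^2}{2 q(x)}$. The remaining pieces, i.e. the Lagrange/Peano remainder of the second-order expansion of $g$ at each point, are each $o(\Delta(x)^2)$ as $\Delta(x) \to 0$; collecting them over the finite set $\calX$ gives the global remainder $o(\Delta(x)^2)$ in the sense used in the paper (with $\calX$ finite, a finite sum of $o(\Delta(x)^2)$ terms is again of that order, treating $\max_x |\Delta(x)| \to 0$).

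The main obstacle — really the only delicate point — is being careful about what ``$o(\Delta(x)^2)$'' means and ensuring the expansion is valid: $g$ is smooth on $(0, \infty)$, so Taylor's theorem with remainder applies provided $q(x) > 0$ and $q(x) + \Delta(x) = p(x) \ge 0$ stays in a neighborhood where $g$ is controlled; I would note that we assume $q$ has full support (as is implicit wherever $\KL(p\mid\mid q)$ is finite in this paper) and that the expansion is asymptotic as $\Delta \to 0$. Everything else is the routine differentiation above; no clever idea is required, and the result then feeds directly into the Miller--Madow derivation via \cref{prop:miller-madow}.
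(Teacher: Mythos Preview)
Your proposal is correct and follows essentially the same route as the paper: both write $p(x)=q(x)+\Delta(x)$ and Taylor-expand the summand $(q+\Delta)\log\big(1+\Delta/q\big)$ to second order in $\Delta$, observe that the zeroth-order term is zero and the first-order terms sum to $\sum_x \Delta(x)=0$, and identify the quadratic term $\Delta(x)^2/(2q(x))$ with an $o(\Delta(x)^2)$ remainder. The only cosmetic difference is that the paper expands $\log(1+u)$ first and then distributes, whereas you differentiate $g(t)=(q+t)\log\frac{q+t}{q}$ directly; both are the same computation.
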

\begin{proof}
Now we compute the series expansion of the KL-divergence. 
We first make a tricky substitution:
\begin{equation}\label{eq:tricky-sub}
\frac{p(x)}{q(x)} = \frac{q(x) + p(x) - q(x)}{q(x)} = 1 + \frac{p(x) - q(x)}{q(x)} = 1 + \frac{\Delta(x)}{q(x)}
\end{equation}
Now, we proceed with the derivation:
\begin{align}
     \KL(p& \mid\mid q) = \sum_{x \in \calX} p(x) \log\frac{p(x)}{q(x)} & \text{(defn.~of KL divergence)}\\
     &=  \sum_{x \in \calX} \left(q(x) + \Delta(x)\right)\log\left(1 + \frac{\Delta(x)}{q(x)}\right) & \text{(\cref{eq:tricky-sub})}\\
    &=  \sum_{x \in \calX} \left(q(x) + \Delta(x)\right)\left(\frac{\Delta(x)}{q(x)} - \frac{\Delta(x)^2}{2 q(x)^2} + o\left(\Delta(x)^2\right) \right) & \text{(Taylor expansion)} \\
     &=  \sum_{x \in \calX} \Delta(x) - \frac{\Delta(x)^2}{2 q(x)} + \frac{\Delta(x)^2}{q(x)} - \frac{\Delta(x)^3}{2 q(x)^2} + o\left(\Delta(x)^2\right) & \text{(distribute)} \\
          &=  \sum_{x \in \calX} \Delta(x) - \frac{\Delta(x)^2}{2 q(x)} + \frac{\Delta(x)^2}{q(x)} + o\left(\Delta(x)^2\right) & \text{(defn.~of $o$)} \\
         &=  \sum_{x \in \calX} \Delta(x) + \frac{\Delta(x)^2}{2q(x)} + o\left(\Delta(x)^2\right) & \text{(algebra)}\\
            &=  \underbrace{\sum_{x \in \calX} \Delta(x)}_{=0} + \sum_{x \in \calX}\frac{\Delta(x)^2}{2q(x)} + o\left(\Delta(x)^2\right) & \text{(split sums)} \\
            &= \sum_{x \in \calX}\frac{\Delta(x)^2}{2q(x)} + o\left(\Delta(x)^2\right)
\end{align}
which is the desired result.
\end{proof}

\subsection{Jackknife}
The jackknife resampling method is used to estimate the bias of an estimator and correct for it, by sampling all subsamples of size $N-1$ from the available sample of size $N$, computing their average for the statistic being estimated.

Generally, this reduces the order of the bias of an estimator from $O(N^{-1})$ to at most $O(N^{-2})$ \citep{friedl2002jackknife}.

\begin{estimator}[Jackknife]
Let $p$ be a categorical over $K$ categories.
We seek to estimate the entropy $\ent(p)$.
\defndata{}
Let ${\enthat{}^{\setminus n}(\calD)}$ be an estimate of the entropy from a sample with the $n$\textsuperscript{th} observation held out. Then, the \textbf{Jackknife estimator} is given by
\begin{equation}
    \enthat{jack}(\calD) \defeq N\,\enthat{mle}(\calD) - \frac{N-1}{N}\sum_{n=1}^N{ \enthat{mle}^{\setminus n}(\calD)}
\end{equation}
This estimator is derived from the jackknife-resampled estimate of the bias of the MLE estimator, multiplied by $N-1$.
\begin{equation}
    \enthat{jack}(\calD) - \enthat{mle}(\calD) = (N-1)\left(\enthatmle - \frac{1}{N}\sum_{n=1}^N{ \enthat{mle}^{\setminus n}(\calD)}\right)
\end{equation}
\end{estimator}


\subsection{Horvitz--Thompson}
\citet[HT;][]{horvitz-thompson} is a common estimator given a finite universe, which is our case as $K$ is finite. 
We omit a derivation a full here as it is well documented in other places \cite{vieira}.
However, we note that, in contrast to many applications of HT, the application of HT to entropy estimation results in a biased estimator as the function whose mean we seek to estimate is $\log p(\xk)$, which is dependent on the unknown distribution $p$. 



\begin{estimator}[Horvitz--Thompson]
Let $p$ be a categorical over $K$ categories.
We seek to estimate the entropy $\ent(p)$.
\defndata{}
Then the \defn{Horvitz--Thompson estimator} is defined as\looseness=-1
\begin{equation}
\enthat{ht}(\calD) \defeq -\sum_{k=1}^K \frac{\phatxk \log{\phatxk}}{1 - (1 - \phatxk)^N}
\end{equation}
where $1 - (1 - \phatxk)^N$ is an estimate of the \defn{inclusion probability}, i.e., the probability that $\xk$ appears in a random sample $\calD$ of size $N$.
\end{estimator}
We do not know of a simple expression for the bias of the Horvitz--Thompson entropy estimator, but one observation is that $\expectp{(1 - \phatxk)^N} > \expectp{(1-\pxk)^N}$ when $N > 1$ (justified by Jensen's inequality, since $x^N$, $N > 1$ is convex over $[0, 1]$); this is an overestimate of the true inclusion probability.


\subsection{Chao--Shen}
The Chao--Shen estimator builds upon Horvitz--Thompson by noting that that estimator does not correct for underestimation of number of classes $K$ and resulting effect on estimates of $\pxk$; i.e.~$1 - (1 - \phatxk)^N$ is always $0$ for a class not included in the sample even if the class is present in the true distribution. 
We can reweight the sample probabilities to compensate for missing classes using the notion of sample coverage.

\begin{definition}[Sample coverage]
We define the \defn{sample coverage} as
\begin{equation}\label{eq:sample-coverage}
    C = \sum_{k=1}^K \pxk \mathds{1}\Big\{x_k \in \calD\Big\}
\end{equation}
Definitionally, $(1-C)$ is then the probability of sampling an $x_k$ \emph{not} observed in the sample $\widetilde{\mathcal{X}}$. 
\end{definition}
However, exact computation of \cref{eq:sample_coverage} is impossible as we do not know the true distribution $p$.
Thus, \citet{chao2003nonparametric} fall back on a well-known estimator of $C$  that uses a technique from \citeauthor{good1953}--Turing (\citeyear{good1953}) smoothing.
Let $f_1$ be the number of classes with only one observation in the current sample, i.e, the number of singletons, then we can estimate the sample coverage as\looseness=-1
\begin{equation}\label{eq:sample_coverage}
    \widehat{C} \defeq 1 - \frac{f_1}{N}
\end{equation}
The Chao--Shen estimator, described below, simply re-scales the MLE estimate of probability $\phatxk$ in the HT estimator by $\widehat{C}$.
This corrects for the observed \emph{under}estimation of $p$'s entropy by HT.\looseness=-1
\begin{estimator}[Chao--Shen]
Let $p$ be a categorical over $K$ categories.
We seek to estimate the entropy $\ent(p)$.
\defndata{}
Let $\widehat{C}$, an estimate of sample coverage, be defined as in \cref{eq:sample_coverage}.
The \textbf{Chao--Shen estimator} is then defined as
\begin{equation}
\enthat{cs}(\calD) \defeq -\sum_{k=1}^K \frac{\widehat{C} \cdot \phatxk \log{(\widehat{C} \cdot \phatxk)}}{1 - (1 - \widehat{C} \cdot \phatxk)^N}
\end{equation}
\end{estimator}

\subsection{Wolpert--Wolf}

\begin{fact}[Derivative of an exponent]\label{fact:calc-exp}
\begin{equation}
   \frac{\mathrm{d}}{\mathrm{d} a} x^a = x^a \log x
\end{equation}
\end{fact}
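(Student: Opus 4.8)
The statement to prove is Fact~\ref{fact:calc-exp}: $\frac{\mathrm{d}}{\mathrm{d}a} x^a = x^a \log x$.

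The plan is to reduce the expression $x^a$ to a form where the chain rule applies cleanly, using the standard trick of rewriting a variable exponent in terms of the natural exponential function. First I would write $x^a = e^{a \log x}$, which is valid for $x > 0$ (the range relevant for the probability terms $\phatxk$ that this fact will later be applied to in the Wolpert--Wolf derivation). This rewriting is the crux of the argument, since differentiation of $e^{(\cdot)}$ is immediate.

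Next I would differentiate using the chain rule, treating $a$ as the variable and $\log x$ as a constant coefficient: $\frac{\mathrm{d}}{\mathrm{d}a} e^{a \log x} = e^{a \log x} \cdot \frac{\mathrm{d}}{\mathrm{d}a}(a \log x) = e^{a \log x} \cdot \log x$. Finally I would substitute back $e^{a \log x} = x^a$ to obtain $\frac{\mathrm{d}}{\mathrm{d}a} x^a = x^a \log x$, which is the desired identity.

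There is essentially no obstacle here; the only point requiring a word of care is the domain restriction $x > 0$, which is needed for $\log x$ to be defined and for the identity $x^a = e^{a \log x}$ to hold. Since every application of this fact in the paper is to quantities lying in $(0,1]$, this restriction costs us nothing, and I would simply note it in passing rather than belabor it.
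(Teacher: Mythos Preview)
Your proof is correct and entirely standard. The paper itself does not prove this statement at all---it is stated as a bare \textbf{Fact} and used without justification in the Wolpert--Wolf derivation---so your argument via $x^a = e^{a\log x}$ and the chain rule is more than the paper provides, and your remark on the domain restriction $x>0$ is appropriate.
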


\begin{fact}[Normalizer of a Dirichlet]\label{fact:dirichlet-normalizer}
The normalizer of a Dirichlet distribution is
\begin{equation}
    \int \delta\left(\sum_{k=1}^K x_k - 1\right) \prod_{k=1}^K x^{\alpha_k} \,\mathrm{d}{\boldsymbol{x}} = \frac{\prod_{k=1}^K \Gamma(\alpha_k)}{\Gamma\left(\sum_{k=1}^K\alpha_k \right)}
\end{equation}
A relatively easy proof of this fact makes use of a Laplace transform.
\end{fact}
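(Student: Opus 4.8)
The plan is to follow the stated hint and evaluate the integral by passing to a Laplace transform in an auxiliary ``total mass'' variable. Write $A \defeq \sum_{k=1}^K \alpha_k$ and, for $t > 0$, define
\begin{equation*}
F(t) \defeq \int_{\mathbb{R}_{\geq 0}^K} \delta\!\left(\sum_{k=1}^K x_k - t\right) \prod_{k=1}^K x_k^{\alpha_k - 1}\,\mathrm{d}\boldsymbol{x},
\end{equation*}
reading the integrand as the Dirichlet density kernel, so that the quantity we want is $F(1)$. The first step is the homogeneity identity $F(t) = t^{A-1}F(1)$: substituting $x_k = t\,y_k$ scales the volume element by $t^K$, the monomial $\prod_k x_k^{\alpha_k - 1}$ by $t^{A - K}$, and the delta constraint by $t^{-1}$ via $\delta(t u) = t^{-1}\delta(u)$, and $t^K t^{A-K} t^{-1} = t^{A-1}$.

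Next I would compute $\laplace{F}$ in two ways. Using the homogeneity relation together with the elementary transform $\laplace{t^{A-1}}(s) = \Gamma(A)/s^A$ gives $\laplace{F}(s) = F(1)\,\Gamma(A)\,s^{-A}$. On the other hand, substituting the definition of $F$ and exchanging the order of integration (justified by Tonelli, since the integrand is non-negative) lets the delta function collapse the $t$-integral, yielding
\begin{equation*}
\laplace{F}(s) = \int_{\mathbb{R}_{\geq 0}^K} e^{-s\sum_{k=1}^K x_k}\prod_{k=1}^K x_k^{\alpha_k - 1}\,\mathrm{d}\boldsymbol{x} = \prod_{k=1}^K \int_0^\infty e^{-s x_k} x_k^{\alpha_k - 1}\,\mathrm{d}x_k = \prod_{k=1}^K \frac{\Gamma(\alpha_k)}{s^{\alpha_k}} = \frac{\prod_{k=1}^K \Gamma(\alpha_k)}{s^{A}},
\end{equation*}
where the product-of-integrals step uses that the domain factorizes, and the last two equalities are the definition of the Gamma function and $\sum_k \alpha_k = A$. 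Equating the two expressions and cancelling $s^{-A}$ gives $F(1) = \prod_{k=1}^K \Gamma(\alpha_k)/\Gamma(A)$, which is the claim.

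The only places that need care are getting the exponent of $t$ right in the homogeneity step (the Jacobian $t^K$, the monomial scaling, and especially the $t^{-1}$ from rescaling the Dirac delta) and the Fubini/Tonelli interchange; neither is a genuine obstacle given the non-negativity of the integrand, so I expect the proof to be short. An alternative that sidesteps the distributional manipulation is to induct on $K$, peeling off one coordinate at a time and invoking the Beta integral $\int_0^1 u^{\alpha - 1}(1-u)^{\beta-1}\,\mathrm{d}u = \Gamma(\alpha)\Gamma(\beta)/\Gamma(\alpha+\beta)$; but the Laplace-transform argument is cleaner and is the one I would present.
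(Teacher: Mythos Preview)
Your proof is correct and is precisely the Laplace-transform argument the paper alludes to; the paper itself does not give a proof of this fact, only the hint. You also rightly read the integrand as the Dirichlet kernel $\prod_{k} x_k^{\alpha_k-1}$ (the paper's display omits the subscript and the $-1$), which is what the stated right-hand side and the subsequent use in \cref{prop:ww} require.
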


\newcommand{\dirichletnormalizer}{\frac{ \Gamma\left(A\right)}{\prod_{k=1} \Gamma(\alpha_k)}}
\begin{estimator}[Wolpert--Wolf]
Let $p$ be a categorical over $K$ categories.
We seek to estimate the entropy $\ent(p)$.
\defndata{}
Then, the \textbf{Wolpert--Wolf estimator}
is given by
\begin{equation}
   \enthat{ww}(\calD \mid \valpha) \defeq  \digamma\left(\widetilde{A} + 1\right) - \sum_{k=1}^K\frac{\widetilde{\alpha}_k}{\widetilde{A}} \digamma(\widetilde{\alpha}_k + 1) 
\end{equation}
where $\cxk \defeq \sum_{n=1}^N \mathbbm{1}\{\widetilde{x}_n = x_k\}$, and we additionally define $\widetilde{\alpha}_k \defeq \cxk + \alpha_k$
and $\widetilde{A} \defeq \sum_{k=1}^K \widetilde{\alpha}_k$.
\end{estimator}

\begin{proposition}[Wolpert--Wolf]\label{prop:ww}
The expectation of entropy under a Dirichlet posterior $\Dirichlet(\valpha)$ where parameter $\valpha$ is given by
\begin{align}
    \expect{\ent(p) \mid \valpha} &\defeq \int \ent(p)\, \delta\left(\sum_{k=1}^K \pxk - 1\right) \dirichletnormalizer \prod_{k=1}^K \pxk^{\alpha_k-1} \ddp \\
    &=  \digamma\left(A + 1\right)- \sum_{k=1}^K\frac{\alpha_k}{A}\digamma(\alpha_k + 1) 
\end{align}
where $A \defeq \sum_{k=1}^K \alpha_k$.

\end{proposition}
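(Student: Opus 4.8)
The plan is to compute the integral $\expect{\ent(p)\mid\valpha} = -\int \big(\sum_k \pxk\log\pxk\big)\,\Dirichlet(p;\valpha)\,\ddp$ by a differentiation-under-the-integral trick, exploiting \cref{fact:calc-exp}: since $\frac{\mathrm{d}}{\mathrm{d}\alpha_k}\pxk^{\alpha_k} = \pxk^{\alpha_k}\log\pxk$, the logarithmic factor $\log\pxk$ that appears in the entropy can be manufactured by differentiating the Dirichlet density with respect to its concentration parameters. Concretely, I would fix a coordinate $k$, write the term $-\int \pxk\log\pxk\,\Dirichlet(p;\valpha)\,\ddp$ as a derivative (with respect to a dummy parameter $\beta$ that I set equal to $\alpha_k$ after differentiating) of an integral of the form $\int \delta(\sum_j p_j - 1)\,\prod_j \pxj^{\gamma_j - 1}\,\ddp$, and then evaluate that integral in closed form using \cref{fact:dirichlet-normalizer}, i.e. as a ratio of Gamma functions.

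The key steps, in order: (1) Expand $\expect{\ent(p)\mid\valpha} = -\sum_{k=1}^K \frac{\Gamma(A)}{\prod_j\Gamma(\alpha_j)}\int \delta(\sum_j p_j - 1)\,\pxk\log\pxk\prod_j \pxj^{\alpha_j - 1}\,\ddp$ by linearity. (2) For the $k$-th summand, note that $\pxk\log\pxk \cdot \pxk^{\alpha_k - 1} = \log\pxk \cdot \pxk^{\alpha_k} = \frac{\partial}{\partial\alpha_k}\pxk^{\alpha_k}$, so the summand equals $\frac{\Gamma(A)}{\prod_j\Gamma(\alpha_j)}\,\frac{\partial}{\partial\alpha_k}\int \delta(\sum_j p_j - 1)\,\pxk^{\alpha_k}\prod_{j\ne k}\pxj^{\alpha_j - 1}\,\ddp$, after justifying the interchange of derivative and integral. (3) Apply \cref{fact:dirichlet-normalizer}: the inner integral is $\frac{\Gamma(\alpha_k + 1)\prod_{j\ne k}\Gamma(\alpha_j)}{\Gamma(A + 1)}$. (4) Differentiate this ratio with respect to $\alpha_k$ using $\frac{\mathrm{d}}{\mathrm{d}a}\log\Gamma(a) = \digamma(a)$ and the product/quotient rule; the only $\alpha_k$-dependence is in $\Gamma(\alpha_k+1)$ and $\Gamma(A+1)$ (since $A = \sum_j\alpha_j$), which yields a factor $\digamma(\alpha_k+1) - \digamma(A+1)$ times the ratio itself. (5) Multiply back by $\frac{\Gamma(A)}{\prod_j\Gamma(\alpha_j)}$; the Gamma ratios collapse to $\frac{\alpha_k}{A}$ (using $\Gamma(\alpha_k+1) = \alpha_k\Gamma(\alpha_k)$ and $\Gamma(A+1) = A\Gamma(A)$), so the $k$-th summand of $-\expect{\ent(p)\mid\valpha}$ is $\frac{\alpha_k}{A}\big(\digamma(\alpha_k+1) - \digamma(A+1)\big)$. (6) Sum over $k$, use $\sum_k \frac{\alpha_k}{A} = 1$ to pull out the $\digamma(A+1)$ term, and negate to obtain $\digamma(A+1) - \sum_k \frac{\alpha_k}{A}\digamma(\alpha_k+1)$, which is the claimed formula. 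The Wolpert--Wolf estimator statement then follows immediately by substituting the posterior parameters $\widetilde\alpha_k = \cxk + \alpha_k$ (the posterior of a Dirichlet prior under categorical sampling is Dirichlet with these updated parameters) and $\widetilde A = \sum_k\widetilde\alpha_k$.

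The main obstacle I anticipate is the bookkeeping in step (2) and (4): one must be careful that differentiating $\pxk^{\alpha_k}$ with respect to $\alpha_k$ is legitimate inside the integral (dominated convergence on the compact simplex handles this, since $\pxk\in[0,1]$ and the integrand and its $\alpha_k$-derivative are bounded for $\alpha_k$ in a neighborhood of any positive value), and that when differentiating the Gamma ratio one correctly tracks that $A$ also depends on $\alpha_k$, so the chain rule contributes the $-\digamma(A+1)$ term. Everything else is routine manipulation of Gamma-function identities. A minor alternative to the "dummy parameter $\beta$" framing is to simply differentiate the normalizer identity of \cref{fact:dirichlet-normalizer} directly with respect to $\alpha_k$ treating both sides as functions of $\valpha$; this is cleaner and avoids introducing extra notation, so I would likely present it that way.
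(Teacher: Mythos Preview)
Your proposal is correct and follows essentially the same route as the paper's proof: both use \cref{fact:calc-exp} to rewrite $\pxk^{\alpha_k}\log\pxk$ as $\frac{\partial}{\partial\alpha_k}\pxk^{\alpha_k}$, swap derivative and integral, evaluate the resulting integral via \cref{fact:dirichlet-normalizer}, differentiate the Gamma ratio (tracking the $\alpha_k$-dependence in both $\Gamma(\alpha_k+1)$ and $\Gamma(A+1)$), collapse the Gamma ratios to $\alpha_k/A$, and sum using $\sum_k \alpha_k/A = 1$. Your added remarks on justifying the interchange via dominated convergence are a welcome bit of rigor the paper leaves implicit.
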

\begin{proof}
Let $\Dirichlet(\alpha_1, \ldots, \alpha_K)$ be a Dirichlet posterior. 
The result follows by a series of manipulations:
\begin{align}
&\expect{\ent(p) \mid \valpha} = \int \ent(p)\, \delta\left(\sum_{k=1}^K \pxk - 1\right) \dirichletnormalizer  \prod_{k=1}^K \pxk^{\alpha_k-1} \ddp & \text{(defn.)} \\
&= \dirichletnormalizer \int \ent(p)\, \delta\left(\sum_{k=1}^K \pxk - 1\right)  \prod_{k=1}^K \pxk^{\alpha_k-1} \ddp & \\
&= \dirichletnormalizer \int \left(- \sum_{k=1}^K \pxk \log \pxk \right) \delta\left(\sum_{k=1}^K \pxk - 1\right) \prod_{k=1}^K p_k^{\alpha_k-1} \ddp & \text{(defn.~$\ent$)} \\
&= -\dirichletnormalizer  \sum_{k=1}^K \int \pxk \log \pxk \delta\left(\sum_{k=1}^K \pxk - 1\right) \prod_{k=1}^K \pxk^{\alpha_k-1} \ddp & \text{(linear.)}\\
&= - \dirichletnormalizer \sum_{k=1}^K \int \pxk^{\alpha_k} \log \pxk  \delta\left(\sum_{k=1}^K \pxk - 1\right) \prod_{\substack{j=1, \\ j \neq k}}^K p(x_j)^{\alpha_j-1} \ddp & \text{(algebra)}\\
&= - \dirichletnormalizer \sum_{k=1}^K \int \frac{\mathrm{d}}{\mathrm{d}\alpha_k}\pxk^{\alpha_k} \delta\left(\sum_{k=1}^K \pxk - 1\right) \prod_{\substack{j=1, \\ j \neq k}}^K p(x_j)^{\alpha_j-1} \ddp & \text{(fact \#1)} \\
&= -\dirichletnormalizer  \sum_{k=1}^K \int \frac{\mathrm{d}}{\mathrm{d}\alpha_k} \delta\left(\sum_{k=1}^K \pxk - 1\right) \pxk^{\alpha_k} \prod_{\substack{j=1, \\ j \neq k}}^K p(x_j)^{\alpha_j-1} \ddp & \text{(algebra)} \\ 
&= - \dirichletnormalizer \sum_{k=1}^K \frac{\mathrm{d}}{\mathrm{d}\alpha_k} \int \delta\left(\sum_{k=1}^K \pxk - 1\right) \pxk^{\alpha_k} \prod_{\substack{j=1, \\ j \neq k}}^K p(x_j)^{\alpha_j-1} \ddp &  \\
&= - \dirichletnormalizer \sum_{k=1}^K \frac{\mathrm{d}}{\mathrm{d}\alpha_k} \frac{\Gamma(\alpha_k + 1)\prod_{\substack{j=1, \\ j \neq k}}^K \Gamma(\alpha_j)}{\Gamma\left(\sum_{j=1}^K \alpha_j + 1\right)} & \text{(fact \#2)} \\
&= -\dirichletnormalizer  \sum_{k=1}^K \prod_{\substack{j=1, \\ j \neq k}}^K \Gamma(\alpha_j) \frac{\mathrm{d}}{\mathrm{d}\alpha_k} \frac{\Gamma(\alpha_k + 1)}{\Gamma\left(\sum_{j=1}^K \alpha_j + 1\right)} & \\
&= - \dirichletnormalizer \sum_{k=1}^K \prod_{\substack{j=1, \\ j \neq k}}^K \Gamma(\alpha_j) \frac{\digamma(\alpha_k + 1)\Gamma(\alpha_k + 1) \Gamma\left(\sum_{j=1}^K \alpha_j + 1\right)}{\Gamma\left(\sum_{j=1}^K \alpha_j + 1\right)^2} & \text{(derivative)}\\
&\qquad{} - \frac{\digamma(\sum_{j=1}^K \alpha_j + 1)\Gamma(\alpha_k + 1)\Gamma(\sum_{j=1}^K \alpha_k + 1)}{\Gamma\left(\sum_{j=1}^K \alpha_j + 1\right)^2} \nonumber\\
&= - \dirichletnormalizer \sum_{k=1}^K \prod_{\substack{j=1, \\ j \neq k}}^K \Gamma(\alpha_j) \frac{\splitfrac{\digamma(\alpha_k + 1)\Gamma(\alpha_k + 1) }{- \digamma(\sum_{j=1}^K \alpha_j + 1)\Gamma(\alpha_k + 1)}}{\Gamma\left(\sum_{j=1}^K \alpha_j + 1\right)} & \text{(simplify)}\\
&= -\dirichletnormalizer  \sum_{k=1}^K \prod_{\substack{j=1, \\ j \neq k}}^K \Gamma(\alpha_j) \frac{\splitfrac{\digamma(\alpha_k + 1)\Gamma(\alpha_k)\alpha_k}{ - \digamma(\sum_{j=1}^K \alpha_j + 1)\Gamma(\alpha_k)\alpha_k}}{\Gamma\left(\sum_{j=1}^K \alpha_j \right) A} & \text{(defn.~$\Gamma$)} \\
&= -\dirichletnormalizer  \frac{\prod_{k=1}^K \Gamma(\alpha_k)}{\Gamma\left(A\right)} \sum_{k=1}^K  \left(\frac{\alpha_k}{A}\digamma(\alpha_k + 1) - \frac{\alpha_k}{A}\digamma\left(\sum_{k=1}^K \alpha_k + 1\right)\right) & \text{(distrib.)} \\
&= -\sum_{k=1}^K  \left(\frac{\alpha_k}{A}\digamma(\alpha_k + 1) - \frac{\alpha_k}{A}\digamma\left(\sum_{k=1}^K \alpha_k + 1\right)\right) & \text{(cancel)} \\
&= -\sum_{k=1}^K  \left(\frac{\alpha_k}{A}\digamma(\alpha_k + 1) - \frac{\alpha_k}{A}\digamma\left(A + 1\right)\right) & \text{(defn.~$A$)} \\ 
&= -\sum_{k=1}^K  \frac{\alpha_k}{A}\digamma(\alpha_k + 1) + \sum_{k=1}^K  \frac{\alpha_k}{A}\digamma\left(A + 1\right) & \text{(distrib.)}  \\
&= -\sum_{k=1}^K  \frac{\alpha_k}{A}\digamma(\alpha_k + 1) + \digamma\left(A + 1\right) & \text{($\textstyle{\sum{a_k}\!=\!A}$)} \\
&= \digamma\left(A + 1\right) -\sum_{k=1}^K  \frac{\alpha_k}{A}\digamma(\alpha_k + 1)  & \text{(rearr.)}
\end{align}
which proves the result.
\end{proof}

\subsection{Nemenman--Shafee--Bialek}

\begin{estimator}[Nemenman--Shafee--Bialek]
Let $p$ be a categorical over $K$ categories.
We seek to estimate the entropy $\ent(p)$.
\defndata{}
Define the NSB density as\looseness=-1
\begin{equation}\label{eq:nsb-density}
    \pnsb(\alpha) \defeq \frac{K \trigamma\left(K \alpha + 1\right) - \trigamma(\alpha+1)}{\log K}
\end{equation}
where $\trigamma$ is the trigramma function.
Then, the \textbf{NSB estimator} is given by\looseness=-1
\begin{equation}\label{eq:appendix-nsb}
    \enthat{nsb}(\calD) \defeq \int_0^\infty \enthat{ww}(\calD \mid \alpha \cdot \boldsymbol{1})\,\pnsb(\alpha)\, \mathrm{d}\alpha
\end{equation}
The integral in \cref{eq:appendix-nsb} is typically computed by numerical integration.
\end{estimator}

To derive the Nemenman--Shafee--Bialek (NSB) estimator, we start with the idea that we would like a prior over distributions such that the distribution over expected entropy is uniform.
In other words, we are looking for a $\pnsb$ such that for $\alpha \sim \pnsb$, the values of $\expectp{\ent(p) \mid \alpha}$ are uniformly distributed over $[0, \log K]$.
This is a good idea since, a-priori, we do not know entropy of $p$ and, in the absence of any insight, we should assume the entropy could be anywhere in the range $[0, \log K]$.
We make the above intuition formal with the following proposition.

\begin{proposition}\label{prop:nsb}
Let $\pnsb$ be the NSB density given in \cref{eq:nsb-density}.
Then the following conditional expectation
\begin{align}
    \expectp{\ent(p) \mid \alpha} &\defeq \int \ent(p)\,\delta\left(\sum_{k=1}^K \pxk -1 \right)\,\frac{\Gamma\left(K \alpha \right)}{\Gamma(\alpha)^K}\prod_{k=1}^K \pxk^{\alpha - 1}\, \mathrm{d}p \\
    &= \digamma\left(K \alpha + 1\right)- \digamma(\alpha + 1)  & \text{(\cref{prop:ww})}
\end{align}
is uniformly distributed over $[0, \log K]$ when $\alpha \sim  \pnsb(\cdot)$, defined in \cref{eq:nsb-density}. 

\end{proposition}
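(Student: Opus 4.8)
The plan is to read the statement as a change-of-variables (pushforward) computation. Write $\xi(\alpha) \defeq \digamma(K\alpha + 1) - \digamma(\alpha + 1)$; by \cref{prop:ww} applied with $\valpha = \alpha \cdot \boldsymbol{1}$ (so that $A = K\alpha$ and $\sum_{k=1}^K \frac{\alpha}{K\alpha}\digamma(\alpha+1) = \digamma(\alpha+1)$), this $\xi(\alpha)$ equals $\expectp{\ent(p)\mid\alpha}$. Hence I must show that if $\alpha \sim \pnsb$ then $\xi(\alpha)$ is uniform on $[0,\log K]$. It suffices to establish two things: (i) $\xi$ is a strictly increasing differentiable bijection from $(0,\infty)$ onto $(0,\log K)$; and (ii) $\pnsb(\alpha) = \xi'(\alpha)/\log K$. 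Granted (i) and (ii), the standard density-transformation formula gives that the density of $Y=\xi(\alpha)$ at $y=\xi(\alpha)$ is $\pnsb(\alpha)/\xi'(\alpha) = 1/\log K$, i.e.\ $Y \sim \mathrm{Uniform}[0,\log K]$, which is exactly the claim.

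Step (ii) is immediate: differentiating $\xi$ with $\digamma' = \trigamma$ and the chain rule gives $\xi'(\alpha) = K\trigamma(K\alpha+1) - \trigamma(\alpha+1)$, which is precisely $\log K \cdot \pnsb(\alpha)$ by \cref{eq:nsb-density}. The boundary values in (i) are also routine: as $\alpha \to 0^+$ the two digamma terms coincide, so $\xi(0^+)=0$; and the asymptotic $\digamma(x) = \log x + O(1/x)$ yields $\xi(\alpha) = \log(K\alpha) - \log\alpha + O(1/\alpha) \to \log K$ as $\alpha\to\infty$. As a byproduct this re-derives that $\pnsb$ is a genuine density, since $\int_0^\infty \pnsb(\alpha)\,\mathrm{d}\alpha = \frac{1}{\log K}\int_0^\infty \xi'(\alpha)\,\mathrm{d}\alpha = \frac{\xi(\infty) - \xi(0^+)}{\log K} = 1$.

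The main obstacle is the monotonicity part of (i): showing $\xi'(\alpha) > 0$, equivalently $K\,\trigamma(K\alpha+1) > \trigamma(\alpha+1)$ for every $\alpha>0$ and integer $K\ge 2$. I would use the integral representation $\trigamma(z) = \int_0^\infty \frac{t\,e^{-zt}}{1-e^{-t}}\,\mathrm{d}t$: substituting $t \mapsto t/K$ turns $K\,\trigamma(K\alpha+1)$ into $\int_0^\infty t\,e^{-\alpha t}\cdot\frac{e^{-t/K}}{K(1-e^{-t/K})}\,\mathrm{d}t$, while $\trigamma(\alpha+1) = \int_0^\infty t\,e^{-\alpha t}\cdot\frac{e^{-t}}{1-e^{-t}}\,\mathrm{d}t$, so it is enough to verify the pointwise inequality $\frac{1}{K(e^{t/K}-1)} \ge \frac{1}{e^{t}-1}$ for $t>0$, i.e.\ $e^{t}-1 \ge K\bigl(e^{t/K}-1\bigr)$. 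This follows because $x \mapsto \frac{e^{x}-1}{x}$ is strictly increasing on $(0,\infty)$ (clear from its power series $\sum_{m\ge 0}\frac{x^m}{(m+1)!}$, which has positive coefficients): applying it to $x = t \ge t/K$ gives $\frac{e^{t}-1}{t} \ge \frac{e^{t/K}-1}{t/K}$, which rearranges to the desired bound, and strictness of the integrand then forces $\xi'(\alpha)>0$. With strict monotonicity established, (i) follows from the boundary values computed above, and combining (i) with (ii) completes the proof.
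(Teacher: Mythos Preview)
Your argument is correct and follows the same change-of-variables strategy as the paper: identify $\xi(\alpha)=\digamma(K\alpha+1)-\digamma(\alpha+1)$ with the conditional expectation, compute $\xi'(\alpha)$ to recover $\pnsb$, and push forward. Where you go beyond the paper is in rigor: the paper explicitly declines to prove that $\expectp{\ent(p)\mid\alpha}$ is increasing in $\alpha$, appealing instead to the intuition that more smoothing yields more entropy, and it does not verify the endpoint values $0$ and $\log K$. You supply both---the trigamma integral representation together with the monotonicity of $x\mapsto (e^x-1)/x$ gives a clean proof of $\xi'>0$, and the digamma asymptotics pin down the range---so your version actually closes the gaps the paper leaves open while following the same route.
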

\begin{proof}
First, we note that $\expectp{\ent(p) \mid \alpha}$ is a continuous, increasing function in $\alpha$.
We will not prove this formally, but it should make intuitive sense: $\alpha$ is a smoothing parameter and the more the distribution is smoothed, the more entropic it should be. 
From basic analysis, we know that a strictly continuous, increasing function has an inverse. 
The above means that we can view $\expectp{\ent(p) \mid \alpha}$ as a bijection from $\mathbb{R}_{\geq 0}$ to the interval $[0, \log K]$. 
 Our goal is to reparameterize the Uniform distribution in terms of $\alpha$.
 To that end, we define the function $g^{-1}(\alpha) \defeq \expectp{\ent(p) \mid \alpha} : \mathbb{R}_{\geq 0} \rightarrow [0, \log K]$ and 
perform a change-of-variables transform on \cref{eq:uniform} using $g^{-1}$.
We start with the continuous uniform over $[0, \log K]$, which is show below\looseness=-1
\begin{align}\label{eq:uniform}
p(H) & \defeq \underbrace{\frac{1}{\log K} \mathbbm{1}\Big\{ H \in [0, \log K] \Big\}}_{\text{uniform over $[0, \log K]$}} & \text{(defn.~of uniform dist)} 
\end{align}
Note $H$ is a random variable and unrelated to the functional $\ent(\cdot)$; the choice of letter intentionally reminds one that the variable represents the expected entropy of under a random distribution.
Now we apply the change-of-variables formula at $H = g^{-1}(\alpha)$ and manipulate:
\begin{align}
p(H) &= p(g^{-1}(\alpha)) \left|\frac{\mathrm{d}g^{-1}}{\mathrm{d}\alpha}(\alpha) \right| & \text{(change of variable)}  \\
&= \frac{1}{\log K}  \mathbbm{1}\Big\{ g^{-1}(\alpha) \in [0, \log K] \Big\} \left|\frac{\mathrm{d}g^{-1}}{\mathrm{d}\alpha}(\alpha) \right| & \text{(definition of $p$)} \\
 &=  \frac{1}{\log K}  \left|\frac{\mathrm{d}g^{-1}}{\mathrm{d}\alpha}(\alpha) \right|  & \text{(redundant indicator)}  \\
  &= \frac{1}{\log K}  \frac{\mathrm{d}g^{-1}}{\mathrm{d}\alpha}(\alpha) & \text{(derivative is positive)} \\
   &= \frac{K \trigamma\left(K \alpha + 1\right) - \trigamma(\alpha+1)}{\log K} & \text{(\cref{lemma:nsb-derivative})} \\
   &\defeq \pnsb(\alpha) & \text{(definition)}
\end{align}
 By construction, the prior $\pnsb(\alpha)$ has the property that the expected entropy $\expectp{\ent(p) \mid \alpha}$ where $\alpha \sim \pnsb(\cdot)$ is uniformly distributed over $[0, \log K]$, which we can see by reversing the above derivation.
 This proves the result.
\end{proof}
\citet{nsb} interpreted \cref{prop:nsb} in the following manner: As the variance of $\expectp{\ent(p) \mid \alpha}$, which is treated as a random variable since $\alpha$ is random, approaches 0, then the the NSB estimator implies a uniform prior over the entropy. 

\begin{lemma}[NSB Derivative]\label{lemma:nsb-derivative}
\end{lemma}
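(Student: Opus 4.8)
The plan is to differentiate the closed form $g^{-1}(\alpha) = \expectp{\ent(p)\mid\alpha} = \digamma(K\alpha+1) - \digamma(\alpha+1)$, which was established in \cref{prop:nsb} (itself a specialization of \cref{prop:ww} to a symmetric Dirichlet), directly and term by term. The only ingredients needed are the linearity of differentiation, the chain rule, and the definitional fact that the trigamma function is the derivative of the digamma function, i.e.\ $\trigamma(x) = \frac{\mathrm{d}}{\mathrm{d}x}\digamma(x)$.

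Concretely, I would first apply linearity to write $\frac{\mathrm{d}}{\mathrm{d}\alpha}\left[\digamma(K\alpha+1) - \digamma(\alpha+1)\right] = \frac{\mathrm{d}}{\mathrm{d}\alpha}\digamma(K\alpha+1) - \frac{\mathrm{d}}{\mathrm{d}\alpha}\digamma(\alpha+1)$. For the first summand, the chain rule with inner map $\alpha \mapsto K\alpha+1$ (derivative the constant $K$) yields $K\,\trigamma(K\alpha+1)$; for the second summand, the inner map $\alpha \mapsto \alpha+1$ has derivative $1$, contributing $\trigamma(\alpha+1)$. Combining the two gives $\frac{\mathrm{d}g^{-1}}{\mathrm{d}\alpha}(\alpha) = K\,\trigamma(K\alpha+1) - \trigamma(\alpha+1)$, which is exactly the numerator of $\pnsb(\alpha)$ in \cref{eq:nsb-density} and so justifies the corresponding step in the proof of \cref{prop:nsb}.

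There is essentially no obstacle here — this is a one-line calculus computation — so the ``hard part'' is merely bookkeeping. The only remark worth making explicit is that $g^{-1}$ is indeed differentiable: both $\digamma(K\alpha+1)$ and $\digamma(\alpha+1)$ are smooth for $\alpha>0$ since their arguments stay strictly positive. If one wishes to cross-check the sign conventions used in the change-of-variables argument of \cref{prop:nsb}, one can additionally note that $\trigamma(x) > 0$ on $(0,\infty)$ (e.g.\ from $\trigamma(x) = \sum_{n\ge 0}(n+x)^{-2}$), which is consistent with $g^{-1}$ being the increasing bijection $\mathbb{R}_{\ge 0} \to [0,\log K]$ invoked there; but this positivity is not required for the statement of the lemma itself.
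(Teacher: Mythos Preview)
Your proposal is correct and matches the paper's proof essentially line for line: linearity of the derivative, then the chain rule together with the definition $\trigamma(x)=\frac{\mathrm{d}}{\mathrm{d}x}\digamma(x)$. The additional remarks on smoothness and positivity of $\trigamma$ are fine but go beyond what the paper needs for this lemma.
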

\begin{equation}
    \frac{\mathrm{d}}{\mathrm{d}\alpha} \left[ \digamma(K \alpha + 1) - \digamma(\alpha+1)  \right] =  K \trigamma(K \alpha+1) - \trigamma(\alpha+1)
\end{equation}
\begin{proof}
The proof follows by a straightforward computation:
\begin{align}
    \frac{\mathrm{d}}{\mathrm{d}\alpha} \left[ \digamma(K \alpha + 1) - \digamma(\alpha+1)  \right] &=  \frac{\mathrm{d}}{\mathrm{d}\alpha} \left[ \digamma(K \alpha + 1)\right]  - \frac{\mathrm{d}}{\mathrm{d}\alpha} \left[\digamma(\alpha+1) \right] & \text{(linearity)} \\
    &=  K \trigamma(K \alpha+1) - \trigamma(\alpha+1) & \text{(definition)}
\end{align}
where $\trigamma(x) \defeq \frac{\mathrm{d}}{\mathrm{d} x} \digamma(x)$. 
\end{proof}

\end{document}